\providecommand\m[1]{\ensuremath{#1}\xspace}
\renewcommand{\m}[1]{\ensuremath{#1}\xspace}
\newcommand{\trval}[1]{\m{\mathbf{#1}}}
	\newcommand{\lrule}{\leftarrow}
	\newcommand{\cause}{\stackrel{c}{\lrule}}
	\newcommand{\ltrue}{\trval{t}}
	\newcommand{\lfalse}{\trval{f}}
	\newcommand{\lunkn}{\trval{u}}
	\newcommand{\Tr}{\ltrue}
	\newcommand{\Fa}{\lfalse}
	\newcommand{\Un}{\lunkn}
	\newcommand{\struct}{\m{I}}
	\newcommand{\rules}{\m{R}}
	\NewDocumentCommand\inter{g+g}{%
	  \IfNoValueTF{#1}
	    {\struct}
	    {\m{#1^{#2}}}}
	\renewcommand{\int}{\m{\mathbb{Z}}}
	\newcommand{\leqt}{\m{\leq_t}}
	\newcommand{\geqt}{\m{\geq_t}}
	\NewDocumentCommand\subs{g+g}{%
	  \IfNoValueTF{#1}
	    {\m{/}}
	    {\m{#1/ #2}}}
\newcommand{\ouracronym}[3]{%
	\newacronym{#1}{#2}{#3}
	\expandafter\newcommand\csname #1\endcsname{\gls{#1}\xspace}%
}
	\def\ifenv#1{
	\def\@tempa{#1}%
	\def\@ttempa{#1*}%
	\ifx\@tempa\@currenvir
	\expandafter\@firstoftwo
	\else
	\expandafter\@secondoftwo
	\fi
	}
	\newcommand{\ddrule}[4]{\ensuremath{#1 \leftarrow #2 & \{#3\} & #4}}
	\newcommand{\drule}[2]{\ensuremath{#1 & \leftarrow & #2}}
	\newcommand{\darule}[4]{\ensuremath{#1 \leftarrow #2 & \{#3\} & #4}}
	\newcommand{\arule}[2]{\ensuremath{#1 \, &\leftarrow \, #2}}
	\newcommand{\LNDRule}[2]{
	\ifenv{array}
	{\drule{#1}{#2}}
	{ \ifenv{align}
		{\arule{#1}{#2}}
		{\ifenv{align*}
		{\arule{#1}{#2}}
		{ERROR: using LDRule in unsupported environment: \@currenvir}
		}
	}
	}
	\newcommand{\LDRule}[4]{
	\ifenv{array}
	{\ddrule{#1}{#2}{#3}{#4}}
	{ \ifenv{align}
		{\darule{#1}{#2}{#3}{#4}}
		{\ifenv{align*}
		{\darule{#1}{#2}{#3}{#4}}
		{ERROR: using LDRule in unsupported environment: \@currenvir}
		}
	}
	}
	\NewDocumentCommand\LRule{m+g+g+g}{%
		\IfNoValueTF{#2}%
		{#1.&}{%
		\IfNoValueTF{#3}
		{\LNDRule{#1}{#2.}}
		{\LDRule{#1}{#2.}{#3}{#4}}%
		}
	}
	\NewDocumentCommand\CLRule{m+g}{%
	\ifenv{array}
	{\cdrule{#1}{#2}}
	{ \ifenv{align}
		{\carule{#1}{#2}}
		{\ifenv{align*}
			{\carule{#1}{#2}}
			{ERROR: using CLRule in unsupported environment: \@currenvir}
		}
	}
	}
	\NewDocumentCommand\carule{m+g}{%
		\IfNoValueTF{#2}
			{\ensuremath{#1.}}
			{\ensuremath{#1 \, &\cause \, #2}}}
	\NewDocumentCommand\cdrule{m+g}{%
		\IfNoValueTF{#2}
			{\ensuremath{#1.}}
			{\ensuremath{#1 & \cause & #2}}}
	\newcommand{\algrule}[4]{
	\hbox{{#1}:}& 
	\quad #2 ~\longrightarrow~ #3 
	\hbox{~ if } #4\\
	}
	\newcommand{\AlgoRule}[4]{
	\ifenv{array}
	{\algrule{#1}{#2}{#3}{#4}}
		{ERROR: using AlgoRule in unsupported environment: \@currenvir}
	}
	\newcommand{\ignore}[1]{}
	\newcommand{\namedcomment}[3]{%
		\ifthenelse{\boolean{nocomments}}%
		{}
		{
			\ifthenelse{\boolean{commentmargin}}%
				{ {\color{#3} \marginpar{\color{#3}\sc #2}#1}  }
				{  {\color{#3} {\sc #2}: #1}  }
		}%
	}
	\newcommand{\mnamedcomment}[3]{\ifthenelse{\boolean{nocomments}}{}{{\marginpar{\tiny \color{#3}{\sc #2}:#1}}}}
\font\uwavefont=lasyb10 scaled 700
\def\spelling{\bgroup\markoverwith{\lower3.5\p@\hbox{\uwavefont\textcolor{Red}{\char58}}}\ULon}
\def\grammar{\bgroup\markoverwith{\lower3.5\p@\hbox{\uwavefont\textcolor{LimeGreen}{\char58}}}\ULon}
\def\phrasing{\bgroup\markoverwith{\lower3.5\p@\hbox{\uwavefont\textcolor{RoyalBlue}{\char58}}}\ULon}
\newcommand\remove{\bgroup\markoverwith{\textcolor{red}{\rule[0.5ex]{2pt}{0.4pt}}}\ULon}
\newcommand\etal{et al.\@\xspace}
\newcommand\thanksAFTACK{This work was partially supported by  Fonds Wetenschappelijk Onderzoek -- Vlaanderen (project G0B2221N).}
\declaretheorem[style=plain,	name=Theorem,		numberwithin=section]{thm}
\declaretheorem[style=plain,	name=Theorem,		numberlike=thm]{theorem}
\declaretheorem[style=plain,	name=Proposition,	numberlike=thm]{proposition}
\declaretheorem[style=plain,	name=Lemma,		numberlike=thm]{lemma}
\declaretheorem[style=plain,	name=Lemma,		numbered=no]   {lem*}
\declaretheorem[style=plain,	name=Corollary,		numberlike=thm]{corollary}
\declaretheorem[style=definition,	name=Definition,	numberlike=thm]{definition}
\declaretheorem[style=definition,	qed=$\blacktriangle$,	numberlike=thm]{example}
\declaretheorem[style=definition,	qed=$\blacktriangle$,	numbered=no]{ex*}
\declaretheorem[style=remark,	name=Notation,		numbered=no]{nota*}
\declaretheorem[style=remark,	qed=$\blacktriangle$,	name=Note,		numbered=no]{note*}
\newcommand\setcitation[2]{%
  \csdef{mycommoncitation#1}{#2}}
\newcommand\getcitation[1]{%
  \csuse{mycommoncitation#1}}
\newcommand\refto[1]{%
      \ifcsname mycommoncitation#1\endcsname%
      \getcitation{#1}%
      \else%
      #1%
      \fi%
      }
\newcommand\mycite[1]{%
      \ifcsname mycommoncitation#1\endcsname%
   \cite{\getcitation{#1}}%
  \else%
    \cite{#1}%
  \fi%
}	
\tikzset{vertex/.style = {shape=circle,draw,minimum size=1em}}
\tikzset{edge/.style = {->,> = latex'}}
\crefname{conjecture}{Conjecture}{Conjectures}
\newcommand\setl[1]{\m{\left\lbrace #1 \right\rbrace}}
\newcommand{\tild}{\m{\mathord{\sim}}}
\newcommand{\suppletter}{\m{\mathcal{S}}}
\newcommand{\suppopsys}[1]{\m{\suppletter_{#1}}}
\newcommand{\interp}{\m{\mathcal{I}}}
\newcommand{\be}{\m{\mathcal{B}}}
\newcommand{\besub}[1]{\m{\be_{\mathrm{#1}}}}
\newcommand{\bekk}{\m{\besub{KK}}}
\newcommand{\besp}{\m{\besub{sp}}}
\newcommand{\F}{\m{\mathcal{F}}}
\newcommand{\jf}{\m{\mathcal{J}\kern-0.2em\mathcal{F}}}
\newcommand{\jfcomplete}[1][\rules]{\m{\left\langle \F, \Fd, #1\right\rangle}}
\newcommand{\js}{\m{\mathcal{J}\kern-0.2em\mathcal{S}}}
\newcommand{\jscomplete}[1][\be]{\m{\left\langle \F, \Fd, \rules, #1\right\rangle}}
\newcommand{\lf}{\m{\mathcal{L}}}
\newcommand{\Fd}{\m{\F_d}}
\newcommand{\Fo}{\m{\F_o}}
\newcommand{\pathstyle}[1]{\mathbf{#1}}
\newcommand{\branch}{\m{\pathstyle{b}}}
\newcommand{\branches}{\m{B}}
\newcommand{\justifications}{\m{\mathfrak{J}}}
\newcommand{\edges}{\m{E}}
\newcommand{\sel}{\mathcal{S}}
\DeclareMathOperator{\C}{C}
\DeclareMathOperator{\CC}{CC}
\DeclareMathOperator{\suppvalue}{SV}
\DeclareMathOperator{\jval}{val}
\DeclareMathOperator{\im}{Im}
\newcommand\restrict[2]{{%
    \left.\kern-\nulldelimiterspace%
    #1%
    \vphantom{|}%
    \right|_{#2}%
}}
\newenvironment{research}[1]
{\mdfsetup{
    frametitle={\colorbox{white}{\space#1\space}},
    innertopmargin=0.5em,
    frametitleaboveskip=-\ht\strutbox,
    frametitlealignment=\center
  }
  \begin{mdframed}%
  }
  {\end{mdframed}}
\newcommand{\jframe}[1]{\left\{\begin{array}{l}#1\end{array}\right\}}
\renewcommand{\F}{\m{\mathbb{F}}}
\renewcommand{\jf}{\m{\mathbb{JF}}}
\renewcommand{\js}{\m{\mathbb{JS}}}
\renewcommand{\sel}{\m{s}}
\renewcommand{\interp}{\m{I}}
\renewcommand{\suppletter}{\m{S}}
\title{Tree-Like Justification Systems are Consistent\thanks{\thanksAFTACK}}
\author{Simon Marynissen
\institute{KU Leuven, Leuven, Belgium}
\institute{Vrije Universiteit Brussel, Brussel, Belgium}
\email{simon.marynissen@kuleuven.be}
\and
Bart Bogaerts
\institute{Vrije Universiteit Brussel, Brussel, Belgium}
\email{bart.bogaerts@vub.be}
}
\begin{document}
\maketitle

\begin{abstract}
Justification theory is an abstract unifying formalism that captures semantics of various non-monotonic logics.
One intriguing problem that has received significant attention is the \emph{consistency problem}: under which conditions are justifications for a fact and justifications for its negation suitably related.
Two variants of justification theory exist: one in which justifications are trees and one in which they are graphs.
In this work we resolve the consistency problem once and for all for the tree-like setting by showing that all reasonable tree-like justification systems are consistent.
\end{abstract}

\section{Introduction}

%

Justification theory \cite{lpnmr/DeneckerBS15} is a unifying theory to capture semantics of non-monotonic logics.
Largely thanks to its abstract nature, it is a powerful framework with many use cases.
First, it provides a mechanism to \emph{define new logics} based on well-known principles in a uniform way, as well as to \emph{transfer results} between domains.
Second, it brings order in the zoo of logics and semantics, by enabling a \emph{systematic comparison} between multiple semantics for a single logic and between different logics, for instance by answering the question whether a certain  semantics of a given logic coincides with a semantics of another logic.
Third, building on the notion of \emph{nested justification systems}\footnote{Nested justification systems were originally defined by Denecker \etal \cite{lpnmr/DeneckerBS15}, but have remained largely unexplored since then. In a companion paper to this paper \cite{tplp/MarynissenBDH22}, we provide a systematic study of nested systems and their properties}, it facilitates \emph{modular definitions of knowledge representation languages and semantics}.

Justification theory builds on the semantic notion of a \emph{justification}, which can intuitively be understood as an explanation (in the form of a tree or a directed graph) as to why a certain fact is true or false.
For this reason, logics of which the semantics is captured by justification theory automatically come with a mechanism of \emph{explanation}, which is of increasing importance for societal and legal reasons.
On top of that, justifications have repeatedly proven to be useful algorithmically.
They have been used in the unfounded set algorithm of Gebser \etal \cite{iclp/GebserKKS09}, for improving lazy grounding algorithms \cite{ijcai/BogaertsW18}, as well as to speed-up parity game solvers \cite{vmcai/LapauwBD20}.

The roots of justification theory can be traced back to the doctoral thesis of Denecker \cite{DeneckerPhD93}, where it was developed as a framework for studying semantics of logic programs.
Later, Denecker \etal \cite{lpnmr/DeneckerBS15} developed a more general theory, aiming to also capture other knowledge representation formalisms, such as abstract argumentation \mycite{AF}, and nested least and greatest fixpoint definitions \cite{tplp/HouDD10}.
A notable difference between the original work of Denecker \cite{DeneckerPhD93} and the theory of Denecker \etal \cite{lpnmr/DeneckerBS15} is that in the former work, a justification is a tree where the nodes are labeled with literals, while in the latter a justification is a (directed) graph.
The relationship between these two formalisms was studied among others by Marynissen \etal \cite{tplp/MarynissenBD20}.
For clarity, we will refer to \emph{tree-like} and \emph{graph-like} justifications when the distinction is important.

In justification theory, whether or not a justification is ``good'' is determined by a \emph{branch evaluation}.
A branch evaluation is a function that associates to each path through a justification (i.e., to each branch of the justification) a value.
However, a priori,  it is not clear that each branch evaluation induces a well-defined semantics.
For this to be the case, intuitively the justifications of a fact $x$ and those of its negation $\tild x$ should be suitably related.
Indeed, we cannot accept that there is both an explanation that $x$ is true and an explanation that $\tild x$ is true.
In general, to get a well-defined semantics we will need that the best possible justification for $x$ and the best possible justification for $\tild x$ are complementary (either one of them is true and the other false, or both are unknown).
The problem of determining whether or not a branch evaluation induces a well-defined semantics is known as the \emph{consistency problem} and has been studied in several papers.
\begin{itemize}
  \item Denecker \cite[Theorem 4.3.1]{DeneckerPhD93} studied the consistency problem for \emph{tree-like} justifications for three specific branch evaluations (corresponding to completion semantics, stable semantics and well-founded semantics in logic programming).
  \item Marynissen \etal \cite{nmr/MarynissenPBD18} were the first to exhibit a branch evaluation that (for graph-like justifications) is \emph{not} consistent. Moreover, they showed that for \emph{graph-like} justifications, four branch evaluations (in addition to the three mentioned above, also the Kripke-Kleene branch evaluation) are guaranteed to be consistent.
  \item Marynissen \etal \cite{tplp/MarynissenBD20} investigated the relationship between justification theory and games over graphs  \cite{concur/GimbertZ05}.
  They used this relationship to identify some key properties that, when satisfied by a branch evaluation, guarantee that that branch evaluation is consistent for graph-like \emph{and} tree-like justifications, but only in the context of a \emph{finite} fact space.
\end{itemize}
Despite all these efforts, there is no clear understanding yet of what it is that makes a branch evaluation consistent.
Moreover, proofs of consistency of individual branch evaluations often span several pages.
In this paper, we resolve this question once and for all for tree-like justifications.
Our main theorem states that: \emph{for tree-like justifications, every reasonable branch evaluation is consistent}.
The proof is surprisingly simple (compared to earlier proofs for individual branch evaluations) and is completely included in the main text.
The results presented in this paper are part of the PhD thesis of the first author \cite{phd/Marynissen22}.

The rest of this paper is structured as follows.
We recall some basic definitions of justification theory (focusing solely on tree-like justifications) in  \cref{sec:prelims} and state the consistency problem in \cref{sec:consistency}.
In \cref{sec:core} we present the core theoretic results (in somewhat more generality) that are subsequently used to prove our main result (namely that every reasonable branch evaluation is consistent)  in \cref{sec:result}.
We conclude in \cref{sec:concl}.

\section{Preliminaries}\label{sec:prelims}

We present the core definitions of justification theory, based on the formalization of Marynissen \etal \cite{tplp/MarynissenBD20,ijcai/MarynissenBD21}.

In the rest of this paper, let $\F$ be a set, referred to as a \emph{fact space},
such that $\lf = \setl{\Tr, \Fa, \Un} \subseteq \F$,
where $\Tr$, $\Fa$ and $\Un$ have the respective meaning \emph{true}, \emph{false}, and \emph{unknown}.
The elements of $\F$ are called \emph{facts}.
The set $\lf$ behaves as the three-valued logic with truth order $\leqt$ given by  $\Fa \leqt \Un \leqt \Tr$.
We assume that $\F$ is equipped with an involution $\tild: \F \rightarrow \F$ (i.e., a bijection that is its own inverse)
such that $\tild \Tr=\Fa$, $\tild \Un=\Un$, and $\tild x \neq x$ for all $x \neq \Un$.
For any fact $x$, $\tild x$ is called the \emph{complement} of $x$.
An example of a fact space is the set of literals over a propositional vocabulary $\Sigma$ extended with $\lf$ where $\tild$ maps a literal to its negation.
For any set $A$ we define $\tild A$ to be the set of elements of the form $\tild a$ for $a \in A$.
We distinguish two types of facts: \emph{defined} and \emph{open} facts.
The former are accompanied by a set of rules that determine their truth value.
The truth value of the latter is not governed by the rule system but comes from an external source or is fixed (as is the case for logical facts).

\begin{definition}
  A \emph{justification frame} $\jf$ is a tuple $\jfcomplete$ such that
  \begin{itemize}
    \item $\Fd$ is a  subset of $\F$ closed under $\tild$, i.e.\ $\tild \Fd = \Fd$; facts in $\Fd$ are called \emph{defined};
    \item no logical fact is defined: $\lf \cap \Fd = \emptyset$;
    \item $\rules \subseteq \Fd \times 2^\F$;
    \item for each $x \in \Fd$, $(x, \emptyset) \notin R$ and there is an element $(x, A) \in R$ for $\emptyset \neq A \subseteq \F$.
  \end{itemize}
\end{definition}
The set of \emph{open} facts is denoted as $\Fo:=\F\setminus\Fd$.
An element $(x, A) \in \rules$ is called a \emph{rule} with \emph{head} $x$ and \emph{body} (or \emph{case}) $A$.
The set of cases of $x$ in $\jf$ is denoted as $\jf(x)$.
Rules $(x, A) \in \rules$ are often denoted as $x \gets A$ and if $A=\setl{y_1, \ldots, y_n}$, we often write $x \gets y_1, \ldots, y_n$.

Logic programming rules can easily be transformed into rules in a justification frame.
However, in logic programming, only rules for positive facts are given; never for negative facts.\footnote{In some extensions of logic programming ``classical negation'' is allowed in the head of rules. In that setting, an expression $\lnot p$ is a shorthand for an arbitrary atom that can never be true together with $p$. This is significantly different from our setting where $\tild p$ simply means that $p$ is false, and hence a rule with $\tild p$ in the head states a condition under which $p$ is false. To further   illustrate this difference: if there are no rules for $\tild p$, this means in our setting that $\tild p$ cannot be true (hence $p$ cannot be false). In the aforementioned extensions of logic programming, a lack of rules for $\lnot p$ entails nothing about $p$. }
Hence, in order to apply justification theory to logic programming, a mechanism for deriving rules for negative literals is needed as well.
Similarly, in the setting of argumentation we can naturally derive rules for negative facts (given attack relations), but rules for positive facts are less obvious.
For this, a technique called \emph{complementation} was invented \cite{lpnmr/DeneckerBS15}; it is a generic mechanism that allows turning a set of rules for $x$ into a set of rules for $\tild x$.

Complementation makes use of so-called selection functions for $x$.
A \emph{selection function} for $x$ is a mapping $\sel \colon \jf(x) \rightarrow \F$ such that $\sel(A) \in A$ for all rules of the form $x \gets A$.
Intuitively, a selection function chooses an element from the body of each rule of $x$.
For a selection function $\sel$, the set $\{\sel(A) \mid A \in \jf(x)\}$ is denoted by $\im(\sel)$.
Selection functions can be used to construct rules for $\tild x$ from rules for $x$: intuitively, a fact $\tild x$ can be derived if every rule for $x$ fails.
Formally, if $\rules$ is a set of rules, the complement $\C(\rules)$ is the set of elements of the form  $\tild x \gets \tild \im(\sel)$ with $x$ defined in $\rules$ and $\sel$ a selection function for $x$ in $\rules$.
The \emph{complementation} of a set of rules $\rules$ is $\CC(\rules):=\rules \cup \C(\rules)$.

In general, we will be interested in justification frames where the rules for $x$ and the rules for $\tild x$ are suitably related.
It does not necessarily have to be the case that our frame is obtained by complementation, but still we want to ensure that the rules are compatible.
This has been studied intensively by Marynissen \etal \cite{nmr/MarynissenPBD18}, who have given several equivalent characterizations of when a justification frame is complementary. Here, we give only one such characterization:
\begin{definition}\label{def:charcomplementarywithselection}
  Let $\jf=\jfcomplete$ be a justification frame.
  We call $\jf$ \emph{complementary} if for every $x \in \Fd$ the following hold:
  \begin{enumerate}
    \item\label{def:charcomplementarywithselection:item1} for every selection function $\sel$ for $x$ in $\rules$, there exists an $A \in \jf(\tild x)$ with $A \subseteq \tild \im(\sel)$;
    \item\label{def:charcomplementarywithselection:item2} for every $A \in \jf(x)$, there exists a selection function $\sel$ for $\tild x$ in $\rules$ with $\tild \im(\sel) \subseteq A$.
  \end{enumerate}
\end{definition}
The first item states that if we find a way to block every possible rule for $x$ (by taking the complement of all the elements selected by a selection function $\sel$ for $x$),
then there should be a rule that derives $\tild x$, i.e., there should be some rule $\tild x \leftarrow A$ with $A$ containing only facts in $\tild \im(\sel)$.
The second item states the other direction, namely that whenever we can derive $x$ (by means of a rule $x\lrule A$), it cannot be possible to $\tild x$, i.e., at least one fact in each rule for $\tild x$ should be blocked. This is expressed again by means of a selection function: there should exist a selection function that selects only facts that are the complement of a fact in $A$.
For more intuition regarding complementarity, we refer the reader to Marynissen \etal \cite{nmr/MarynissenPBD18,tplp/MarynissenBD20}.

\begin{example}
    The justification frame
  \begin{align*} p&\lrule q, \tild r\\
    \tild p&\lrule \tild q\\
  \end{align*}
  is not complementary. Intuitively, the first rule states that $p$ holds whenever $q$ is true and $r$ is false; this is the only case in which $p$ can be derived.
  Since this is the only rule for $p$, we expect $p$ to be false (i.e., $\tild p$ to be true) whenever $q$ is false or $r$ is true; the first case is present, but the second case for $\tild p$ is missing.
After adding the rule
  \begin{equation}\tild p \lrule r\label{eq:firstr}\end{equation}
  the frame becomes complementary.
  If we add a further rule
  \begin{equation}\tild p \lrule r,q,\label{eq:seqr}\end{equation}
  the frame is still complementary; intuitively, \eqref{eq:seqr} is a redundant rule, in the sense that it is weaker than \eqref{eq:firstr}.
\end{example}

\begin{definition}
  A \emph{directed labeled graph} is a quadruple $(N, L, \edges, \ell)$ where $N$ is a set of nodes, $L$ is a set of labels, $\edges \subseteq N \times N$ is the set of edges, and $\ell\colon N \rightarrow L$ is a function called the labeling.
  An \emph{internal} node is a node with outgoing edges and a \emph{leaf} node is one without outgoing edges.
\end{definition}

\begin{definition}
  Let $\jf=\jfcomplete$ be a justification frame.
  A \emph{(tree-like) justification} $J$ in $\jf$ is a directed labeled graph $(N, \Fd, \edges, \ell)$ such that
  \begin{itemize}
    \item the underlying undirected graph is a forest,~i.e., is acyclic;
    \item for every internal node $n \in N$ it holds that $\ell(n) \gets \{\ell(m) \mid (n,m) \in \edges\} \in \rules$.
  \end{itemize}
\end{definition}

\begin{definition}
  A justification is \emph{locally complete} if it has no leaves with label in $\Fd$.
  We call $x \in \Fd$ a \emph{root} of a justification $J$ if there is a node $n$ labeled $x$ such that every node is reachable from $n$ in $J$.
\end{definition}

We write $\justifications(x)$ for the set of locally complete justifications rooted in a node labeled $x$.

\begin{definition}
  Let $\jf$ be a justification frame.
  A $\jf$-\emph{branch} is either an infinite sequence in $\Fd$ or a finite
sequence in $\Fd$ followed by an element in $\Fo$.
  For a justification $J$ in $\jf$, a $J$-\emph{branch} starting in $x \in \Fd$ is a path in $J$ starting in $x$ that is either infinite or ends in a leaf of $J$.
  We write $\branches_J(x)$ to denote the set of $J$-branches starting in $x$.
\end{definition}
Not all $J$-branches are $\jf$-branches  since they can end in nodes with a defined fact as label.
However, if $J$ is locally complete, any $J$-branch is also a $\jf$-branch.

We denote a branch $\branch$ as $\branch:x_0 \rightarrow x_1 \rightarrow \cdots$ and define $\tild \branch$ as $\tild x_0 \rightarrow \tild x_1 \rightarrow \cdots$.

\begin{definition}
  A \emph{branch evaluation} $\be$ is a mapping that maps any $\jf$-branch to an element in $\F$ for all justification frames $\jf$.  A branch evaluation $\be$ \emph{respects negation} if $\be(\tild \branch) = \tild \be(\branch)$ for any branch $\branch$.
  A justification frame $\jf$ together with a branch evaluation $\be$
  forms  
a \emph{justification system} $\js$, which is presented as a quadruple $\jscomplete$.
\end{definition}
We now define some branch evaluations that induce semantics corresponding to the equally named semantics of logic programs.

\begin{definition}
  The \emph{supported} (completion) branch evaluation $\besp$ maps $x_0 \rightarrow x_1 \rightarrow \cdots$ to $x_1$ and branches that consist only of a singe open fact $x_0$ to $x_0$.
  The \emph{Kripke-Kleene} branch evaluation $\bekk$ maps finite branches to their last element and infinite branches to $\Un$.
\end{definition}

Other branch evaluations have been defined as well, for instance stable and well-founded branch evaluations, which correspond to the equally-named semantics of logic programming \cite{iclp/GelfondL88,\refto{WFS}}.
We refer the reader to the original work introducing justification theory \cite{lpnmr/DeneckerBS15} for their definitions.

\begin{definition}
  A \emph{(three-valued) interpretation} of $\F$ is a function $\interp:\F \rightarrow \lf$ such that $\interp(\tild x) = \tild \interp(x)$ and $\interp(\ell) = \ell$ for all $\ell \in \lf$.
\end{definition}

\begin{definition}
  Let $\js=\jscomplete$ be a justification system, $\interp$ an interpretation of $\F$, and $J$ a locally complete justification in $\js$.
  Let $x \in \Fd$ be a label of a node in $J$.
  The \emph{value} of $x \in \Fd$ by $J$ under $\interp$ is defined as $\jval(J, x, \interp) = \min_{\branch \in \branches_J(x)} \interp(\be(\branch))$,
  where $\min$ is the minimum with respect to $\leqt$.

  The \emph{supported value} of $x \in \F$ in $\js$ under $\interp$ is defined as
  \begin{align*}
  \suppvalue_{\js}(x, \interp) &= \max_{J \in \justifications(x)} \jval(J,x,\interp)\text{  for $x \in \Fd$  }\\
  \suppvalue_{\js}(x, \interp) &=\interp(x) \text{ for $x \in \Fo$}.
  \end{align*}
\end{definition}

In other words, the value of a fact in a justification is the value of the worst branch starting in that fact, and the supported value of a fact in an interpretation is the value of that fact in its best justification.
Models are then defined as those interpretation in which the supported value of each fact equals their actual value.
Every branch evaluation induces a different class of ``models''. For instance, models under the supported branch evaluation correspond to supported models in logic programming.

When $\js$ is clear from the context, we will drop the subscript and just write $\suppvalue(x,\interp)$ for $\suppvalue_{\js}(x,\interp)$.

\begin{definition}
  Let $\js=\jscomplete$ be a justification system.
  An $\F$-interpretation $\interp$ is a $\js$-model if for all $x \in \Fd$, $\suppvalue(x, \interp) = \interp(x)$.
  If $\js$ consists of $\jf$ and $\be$, then a $\js$-model will also be called a $\be$-model of $\jf$.
\end{definition}

\section{The Consistency Problem}\label{sec:consistency}
We defined \emph{models} of a justification system by a kind of a fixpoint equation: for $\interp$ to be a model, it must be a fixpoint of the operator $\suppopsys{\js}$ that maps $\interp$ to the interpretation $\suppopsys{\js}(\interp)$, which is the function
\[ \F\to \lf: x\mapsto \suppvalue(x,\interp).\]
However, one intriguing problem is that domain and range of the support operator are not equal: the range of the support operator is the set of functions from $\F$ to $\lf$, while the domain is the set of $\F$-interpretations, which are functions from $\F$ to $\lf$ with some additional properties such as $\interp(\tild x) = \tild \interp(x)$ for all $x \in \F$.
We might then wonder under which conditions, these properties will be guaranteed for $\suppopsys{\js}(\interp)$.

On top of that, there is a more fundamental reason why this property is important.
Explanations are of growing importance in various subdomains of artificial intelligence.
In our setting, a justification with value $\Tr$ for $x$ serves as an explanation of why $x$ is true.
But what if $x$ is false? Which semantic structure can explain that?
From the definition of supported value, it can be seen that $x$ is false if \emph{there are no justifications} with a better value (than $\Fa$) for $x$.
But, the absence of such justifications is difficult to argue.
The question that then remains is: how to show that there are no better justifications for $x$.
The most obvious solution is considering a justification of $\tild x$.
Indeed, intuitively, an explanation why the negation of $x$ is true should explain why $x$ is false.
However, this method implicitly assumes that $\suppvalue(\tild x, \interp) = \tild \suppvalue(x,\interp)$, thus motivating the following definition.

\begin{definition}
  A justification system $\js$ is \emph{consistent} if $\suppvalue(\tild x, \interp) = \tild \suppvalue(x,\interp)$ for every $x \in \Fd$ and every $\F$-interpretation $\interp$.
\end{definition}

We are now ready to state the consistency problem, which is the central research question of this article.

\begin{research}{Consistency problem}
  When is a justification system consistent?
  In particular, what properties do branch evaluations and justification frames need to have to ensure that the justification system is consistent?
\end{research}

Consistency is a reasonable assumption that, unfortunately, is not always satisfied.
An obvious way to not satisfy it is by having unrelated rules for $x$ and $\tild x$.

\begin{equation*}
  \jframe{
    x \gets \Tr \\
    \tild x \gets \Tr
  }
\end{equation*}
Of course, in this example we cannot expect that the justifications for $x$ and $\tild x$ are related because their rules are contradictory.
While justification theory in principle works with such rule sets, most of the theory has focused on complementary justification frames, where the rules for $x$ and $\tild x $ are suitably related.
Another way in which consistency can be violated is if $\be$ does not respect negation. 
Again, in its most general form, justification theory allows such strange branch evaluations (e.g., one that maps every branch to $\Tr$, in which every justification is always true), however there are no applications of such branch evaluations.

In fact, the main result of our paper is that for \emph{tree-like} justifications, the two aforementioned syntactic properties are enough to guarantee consistency:

\begin{theorem}[Main theorem]\label{thm:main}
  Let $\js=\jscomplete$ be a justification system. If $\js$ is complementary and $\be$ respects negation, then $\js$ is (tree-like) consistent.
\end{theorem}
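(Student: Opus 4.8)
The plan is to prove the two inequalities $\suppvalue(\tild x,\interp) \leqt \tild\suppvalue(x,\interp)$ and $\tild\suppvalue(x,\interp) \leqt \suppvalue(\tild x,\interp)$ separately, or equivalently, to establish both $\suppvalue(\tild x,\interp) \leqt \tild\suppvalue(x,\interp)$ and (by swapping the roles of $x$ and $\tild x$, which is legitimate since $\Fd$ is closed under $\tild$) the symmetric statement. So it suffices to prove a single inequality: for every $x \in \Fd$ and every $\F$-interpretation $\interp$, $\suppvalue(\tild x,\interp) \leqt \tild\suppvalue(x,\interp)$. Unfolding the definitions, this says $\max_{J' \in \justifications(\tild x)} \jval(J',\tild x,\interp) \leqt \tild\bigl(\max_{J \in \justifications(x)} \jval(J,x,\interp)\bigr)$. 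Since $\tild$ reverses $\leqt$, the right-hand side is $\min_{J\in\justifications(x)} \tild\jval(J,x,\interp)$, and $\tild\jval(J,x,\interp) = \tild\min_{\branch\in\branches_J(x)}\interp(\be(\branch)) = \max_{\branch\in\branches_J(x)} \tild\interp(\be(\branch)) = \max_{\branch\in\branches_J(x)} \interp(\tild\be(\branch)) = \max_{\branch\in\branches_J(x)}\interp(\be(\tild\branch))$, using that $\interp$ and $\be$ both respect negation. So the goal reduces to: for every $J' \in \justifications(\tild x)$ and every $J \in \justifications(x)$,
\[
  \jval(J',\tild x,\interp) \;\leqt\; \max_{\branch \in \branches_J(x)} \interp\bigl(\be(\tild\branch)\bigr).
\]

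The heart of the argument is therefore a combinatorial claim about a single pair $(J,J')$ of locally complete justifications, one rooted in $x$ and one in $\tild x$: I want to extract from them a branch $\branch$ of $J$ such that the $\tild\branch$ (a $\jf$-branch by complementarity and local completeness) witnesses the bound, i.e., such that $\interp(\be(\tild\branch))$ dominates $\jval(J',\tild x,\interp)$, which is itself the value of the worst branch of $J'$. The natural way to build such a $\branch$ is a simultaneous descent through $J$ and $J'$: maintain a node $n$ in $J$ labeled $y$ and a node $n'$ in $J'$ labeled $\tild y$. At $n$, the children give a rule $y \gets A$ in $\rules$ with $A = \{\ell(m)\mid (n,m)\in\edges_J\}$; at $n'$, the children give a rule $\tild y \gets B$ with $B = \{\ell(m')\mid (n',m')\in\edges_{J'}\}$. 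This is exactly the situation complementarity is designed for: by \cref{def:charcomplementarywithselection}\ref{def:charcomplementarywithselection:item2} applied to the rule $y\gets A$, there is a selection function $\sel$ for $\tild y$ with $\tild\im(\sel)\subseteq A$, so in particular $\sel(B) \in B$ and $\tild\sel(B) \in A$; thus $J'$ has an edge from $n'$ to a node labeled $\sel(B)$, and $J$ has an edge from $n$ to a node labeled $\tild\sel(B)$. We walk to those two nodes and repeat. This produces either an infinite pair of branches $\branch'$ in $J'$ and $\branch$ in $J$ with $\branch' = \tild\branch$, or it terminates when we reach a leaf — and here the tree-likeness is essential, because in a tree we cannot revisit a node, so the descent cannot cycle forever without the branch being genuinely infinite; when it terminates, the leaf labels are complementary open facts (local completeness forces leaves to be open), so again $\branch' = \tild\branch$.

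At that point $\branch'$ is a branch of $J'$, hence $\jval(J',\tild x,\interp) \leqt \interp(\be(\branch')) = \interp(\be(\tild\branch))$, and $\branch$ is a branch of $J$ starting in $x$, so $\interp(\be(\tild\branch)) \leqt \max_{\branch\in\branches_J(x)}\interp(\be(\tild\branch))$, which chains to give the required inequality. I expect the main obstacle to be the bookkeeping in the descent argument: one must be careful that the constructed sequences really are honest branches (ending in a leaf of the respective justification or being infinite), that local completeness is invoked at the right moment to guarantee leaves are open facts so $\tild\branch$ is a legitimate $\jf$-branch on which $\be$ is defined, and — most delicately — that the tree structure genuinely prevents a "false infinity" where the walk loops; in a graph-like justification the same walk could cycle through a finite set of nodes while the corresponding $\jf$-branch is infinite and evaluated differently, which is precisely why the graph-like case is harder and why this theorem is specific to the tree-like setting. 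A secondary subtlety is that I have only used item 2 of complementarity in the descent; I should double-check whether item 1 is needed somewhere (for instance to guarantee that $J'$ actually has a rule to descend into at every defined node, but local completeness of $J'$ already gives that), and whether the choice of $\sel$ at each step needs to be made coherently or can be made independently — independent choices suffice since we only ever use $\sel(B)$ for the single rule $B$ we are currently sitting at.
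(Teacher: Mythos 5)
There is a genuine gap, and it is right at the start: the claim that the two inequalities reduce to one another ``by swapping the roles of $x$ and $\tild x$'' is false. The statement you propose to prove is $(\ast)$: for all $x\in\Fd$, $\suppvalue(\tild x,\interp)\leqt\tild\suppvalue(x,\interp)$. Substituting $\tild x$ for $x$ in $(\ast)$ gives $\suppvalue(x,\interp)\leqt\tild\suppvalue(\tild x,\interp)$, and applying the order-reversing involution $\tild$ to both sides turns this back into $\suppvalue(\tild x,\interp)\leqt\tild\suppvalue(x,\interp)$ --- i.e., $(\ast)$ is invariant under the swap, not mapped to its converse. Concretely, $(\ast)$ alone does not exclude the situation $\suppvalue(x,\interp)=\suppvalue(\tild x,\interp)=\Fa$ (then both instances of $(\ast)$ read $\Fa\leqt\Tr$ and hold), which is exactly the failure of consistency one must rule out. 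Your simultaneous-descent argument through a pair $(J,J')$ is correct and is essentially the paper's \cref{lem:complementaryintersectionoppositerules}, \cref{lem:complementaryintersectionoppositejustifications} and \cref{prop:easypartofconsistency}; but it only yields the ``easy'' half: \emph{if there is a good justification for $x$, then every justification for $\tild x$ is bad}. It cannot yield the converse, because a statement quantified as ``for every $J$ and every $J'$ there is a shared complementary branch'' never produces the \emph{existence} of a justification for $\tild x$ all of whose branches are good.

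That missing half --- \emph{if every justification for $x$ has a bad branch, then some justification for $\tild x$ has only good branches} (\cref{prop:difficultpartofconsistency}) --- is the real content of the theorem and is where the paper does its work. The paper introduces a \emph{branch selection} for $x$ (a set of branches hitting every locally complete justification of $x$; here, the set of branches with value at most $\suppvalue(\tild x,\interp)$... more precisely one takes the bad branches of $\tild x$ and builds a justification for $x$), and \cref{thm:core} constructs, from any branch selection $\branchsel$ for $x$, a locally complete justification rooted in $\tild x$ whose branches all lie in $\tild\branchsel$. That construction uses complementarity in the opposite way from your descent: a branch selection induces a selection function for $x$ (\cref{lem:chooseystar}, \cref{cor:chooseystarbranchsel}), and item~\ref{def:charcomplementarywithselection:item1} of \cref{def:charcomplementarywithselection} then supplies a rule for $\tild x$ to build the tree with, level by level. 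Notably, you flagged that you only used item~\ref{def:charcomplementarywithselection:item2} and wondered whether item~\ref{def:charcomplementarywithselection:item1} is needed somewhere: it is needed precisely for the direction your proposal does not prove. To repair the proposal you would need to add an argument of this second kind; the descent through two given justifications cannot be bootstrapped into it.
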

We do want to stress the fact that this only holds for tree-like justification systems: Marynissen \etal \cite{nmr/MarynissenPBD18} gave an example of a branch evaluation that respects negation, on a complementary justification frame,  that is \emph{not} consistent for graph-like justifications (which we do not consider in this paper).

\newcommand\branchsel{\m{\mathbb{B}}}
\newcommand\glue[3]{\m{#1\stackrel{#2}{\rightarrow}#3}}
\newcommand\justificationsRootedIn[1]{\m{\justifications(#1)}}

\section{Constructing a Justification}\label{sec:core}
In this section, we will provide the core theoretic results that are needed to prove our main theorem.
In short, what we aim to show here is that if there are no ``good'' justifications for $x$, then we can construct a ``good'' justification for $\tild x$.
We will prove this in some more generality, making use of a new concept: a \emph{branch selection}: a set of branches starting in a certain fact $x$ that contains at least one branch from every justification of $x$.
The main result of this section (Theorem \ref{thm:core}) then states that for every branch selection $\branchsel$ for $x$, we can construct a justification for $\tild x$ that only has branches in $\tild \branchsel$.

Throughout this section, we fix a justification system $\js=\jscomplete$ with $\jf=\jfcomplete$.

\begin{definition}
%
  A \emph{branch selection} for $x$ in \jf is a set $\branchsel$ of branches starting in $x$ such that
  for each locally complete justification $J$ rooted in $x$, \branchsel contains at least one $J$-branch.
\end{definition}

In what follows, if $x\lrule A$ is a rule, and for each $y\in A$, $J_y$ is a justification\footnote{In case $y$ is an open fact, there is only one such justification, namely a justification with a single node and without edges.} rooted in $y$, we will write $\glue{x}{A}{(J_y)_{y\in A}}$ for the justification in which $A$ are the children of $x$ and such that the subtree rooted in $y\in  A$ equals $J_y$.

\begin{lemma}\label{lem:chooseystar}
Let \branchsel be a branch selection for $x$, and let $x\gets A$ be a rule in $\jf$.
There exists a $y^* \in A$ such that for every justification $J_{y^*} \in \justificationsRootedIn{y^*}$ rooted in $y^*$,
\branchsel contains at least one branch of the form $x\to y^* \to \branch$ with $y^*\to \branch$ a branch starting from the root of $J_{y^*}$.\footnote{In case $y^*$ is an open fact, this thus means that $\branch$ is the empty branch and $y^*\to \branch$ is a branch with only a single element $y^*$.}
%
\end{lemma}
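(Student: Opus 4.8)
The plan is to argue by contradiction, using the gluing construction $\glue{x}{A}{\cdot}$ to build, out of the putative ``bad'' subjustifications, a single locally complete justification rooted in $x$ that witnesses a failure of the defining property of the branch selection $\branchsel$.

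First I would \emph{negate the conclusion}: suppose that for every $y \in A$ there is a justification $J_y \in \justificationsRootedIn{y}$ such that $\branchsel$ contains \emph{no} branch of the form $x \to y \to \branch$ with $y \to \branch$ a branch of $J_y$ starting at its root. This choice is meaningful because $\justificationsRootedIn{y}$ is never empty: if $y \in \Fo$ it contains the single-node justification, and if $y \in \Fd$ one obtains a (possibly infinite) locally complete justification by repeatedly applying rules, which is possible since every defined fact has at least one rule with nonempty body. (Were $\justificationsRootedIn{y}$ empty, the lemma would hold vacuously with $y^* = y$, so nothing needs to be shown in that degenerate case.)

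Next I would \emph{assemble a single justification} $J := \glue{x}{A}{(J_y)_{y \in A}}$, taking the $J_y$ on pairwise disjoint node sets. I claim $J \in \justificationsRootedIn{x}$: the root node $x$ is internal, its children are labeled exactly by $A$, and $x \gets A \in \rules$ by hypothesis, so the rule condition holds at $x$; it is inherited at all other nodes from the $J_y$; the underlying graph is acyclic, being disjoint trees attached under the common root $x$; and $J$ is locally complete since its leaves are exactly the leaves of the $J_y$, all carrying open labels. Hence $J$ is a locally complete justification rooted in $x$.

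Finally I would \emph{extract the contradiction}: by the definition of a branch selection, $\branchsel$ contains some $J$-branch $\branch$. Since every element of $\branchsel$ starts in $x$, and $x$ is internal in $J$ with children $A$, the branch has the shape $x \to y \to \branch'$ for some $y \in A$, where $y \to \branch'$ is a path in $J$ from the root of the subtree $J_y$, i.e.\ a branch of $J_y$ from its root (when $y \in \Fo$, $\branch'$ is empty and the branch is simply $x \to y$). Thus $\branchsel$ contains a branch of precisely the form ruled out by the choice of $J_y$ --- a contradiction. Therefore such a $y^*$ exists. I do not anticipate a genuine obstacle; the only points requiring care are the well-definedness and acyclicity of $\glue{x}{A}{(J_y)_{y \in A}}$ (handled by taking disjoint copies) and the degenerate case of open $y$, both already anticipated by the footnotes in the statement.
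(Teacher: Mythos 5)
Your proof is correct and follows essentially the same route as the paper: negate the conclusion, glue the witnessing subjustifications $(J_y)_{y\in A}$ under the rule $x\gets A$ into a single locally complete justification rooted in $x$, and contradict the defining property of $\branchsel$. The extra care you take about non-emptiness of $\justificationsRootedIn{y}$, disjointness of node sets, and the open-fact case is sound but not needed beyond what the paper already implicitly assumes.
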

\begin{proof}
  Assume by contradiction that such a $y^*$ does not exist.
  This would mean that for each $y \in A$, there exists a $J_y \in \justificationsRootedIn{y}$ such that $\branchsel$ does \emph{not} contain any branches of the form $x\to y \to \branch$ with $y\to\branch$ a branch in $J_y$.
  However, if we have such a justification for each $y\in A$, consider the justification
  $\glue{x}{A}{(J_y)_{y\in A}}$. Since this is a locally complete justification of $x$, $\branchsel$ must contain at least one of its branches, which yields a contradiction.
\end{proof}

\begin{corollary}\label{cor:chooseystarbranchsel}
  Let \branchsel be a branch selection for $x$, and let $x\gets A$ be a rule in $\jf$.
  There exists a $y^* \in A$ such that $\branchsel_{y^*} := \{y^*\to\branch \mid x\to y^*\to \branch \in \branchsel\}$ is a branch selection for $y^*$.
\end{corollary}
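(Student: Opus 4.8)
The plan is to let $y^* \in A$ be the fact supplied by \cref{lem:chooseystar} and to check, directly against the definition, that $\branchsel_{y^*}$ is a branch selection for $y^*$. The combinatorial content is entirely carried by the lemma, so no new argument about gluing justifications is needed here; the corollary is essentially just a reformulation of the lemma's conclusion in the language of branch selections.

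Concretely, I would argue as follows. By construction every element of $\branchsel_{y^*}$ has the form $y^* \to \branch$, hence is a branch starting in $y^*$, so $\branchsel_{y^*}$ is a set of the required kind. Now take an arbitrary locally complete justification $J \in \justificationsRootedIn{y^*}$. Instantiating \cref{lem:chooseystar} with this $J$ in the role of $J_{y^*}$ produces a branch $x \to y^* \to \branch$ in $\branchsel$ for which $y^* \to \branch$ is a branch of $J$ starting at its root, i.e.\ $y^* \to \branch \in \branches_J(y^*)$. By the definition of $\branchsel_{y^*}$ this branch $y^* \to \branch$ lies in $\branchsel_{y^*}$. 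Since $J$ was arbitrary, $\branchsel_{y^*}$ meets at least one branch of every locally complete justification rooted in $y^*$, which is exactly what it means for $\branchsel_{y^*}$ to be a branch selection for $y^*$.

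The only delicate point — and it is a minor one — is the degenerate case in which $y^*$ is an open fact: then the sole locally complete justification rooted in $y^*$ is the single-node graph, $\branch$ is the empty branch, and $y^* \to \branch$ is the one-element branch $y^*$; here one simply appeals to the corresponding clause of \cref{lem:chooseystar}, and the reasoning above is unchanged. I do not expect any genuine obstacle: in particular $\branchsel_{y^*}$ need not contain \emph{all} continuations of $x \to y^*$ that occur in $\branchsel$, but this is harmless, since a branch selection is only required to cover every justification, not to be minimal.
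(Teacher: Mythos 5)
Your proposal is correct and matches the paper's own proof: both take the $y^*$ supplied by \cref{lem:chooseystar} and observe that its conclusion, instantiated at an arbitrary locally complete justification rooted in $y^*$, yields a branch of that justification lying in $\branchsel_{y^*}$. Your write-up is merely more explicit (including the open-fact degenerate case, which the paper relegates to a footnote of the lemma).
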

\begin{proof}
  The only thing we need to show is that $\branchsel_{y^*}$ contains at least one branch from each justification rooted in $y^*$. This follows directly from \cref{lem:chooseystar}.
\end{proof}

\begin{corollary}\label{cor:inductionstep}
  Assume $\js$ 
  is a complementary justification system.
  If $\branchsel$ is a branch selection for $x$ in $\jf$, then there exists a rule
  $\tild x \gets B_{\tild x}$ for $\tild x$ such that for each $y \in \tild B_{\tild x}$, we have that $\branchsel_{y} := \{y\to\branch \mid x\to y\to \branch \in \branchsel\}$
  is a branch selection for $y$.
\end{corollary}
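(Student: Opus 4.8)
The plan is to build, out of \cref{cor:chooseystarbranchsel}, a single selection function for $x$ whose every selected fact carries a well-behaved restricted branch selection, and then to feed this selection function into the complementarity hypothesis to extract the required rule for $\tild x$. So first I would, for each rule $x \gets A$ in $\jf$, use \cref{cor:chooseystarbranchsel} to pick a fact $y^*_A \in A$ such that $\branchsel_{y^*_A} := \{y^*_A \to \branch \mid x \to y^*_A \to \branch \in \branchsel\}$ is a branch selection for $y^*_A$; doing this for all $A \in \jf(x)$ (appealing to the axiom of choice if $\jf(x)$ is infinite) defines a selection function $\sel$ for $x$ in $\rules$ by $\sel(A) = y^*_A$.

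The point of this construction is that the set $\branchsel_y = \{y \to \branch \mid x \to y \to \branch \in \branchsel\}$ depends only on $y$ (and $\branchsel$), not on the rule $A$ through which $y$ happened to be selected. Hence $\sel$ has the property that $\branchsel_y$ is a branch selection for $y$ for \emph{every} $y \in \im(\sel)$, not merely for one such $y$ per rule. Next I would invoke complementarity: by the first item of \cref{def:charcomplementarywithselection} applied to $\sel$, there is a case $B_{\tild x} \in \jf(\tild x)$ --- equivalently, a rule $\tild x \gets B_{\tild x}$ --- with $B_{\tild x} \subseteq \tild \im(\sel)$. Since $\tild$ is an involution, applying it to both sides of this inclusion gives $\tild B_{\tild x} \subseteq \im(\sel)$. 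Therefore every $y \in \tild B_{\tild x}$ lies in $\im(\sel)$, so $\branchsel_y$ is a branch selection for $y$, which is exactly what is claimed.

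I do not anticipate a real obstacle here: the substantive work has already been done in \cref{cor:chooseystarbranchsel} (which in turn rests on the gluing construction of \cref{lem:chooseystar}) and in the selection-function characterization of complementarity. The only steps that need care are bookkeeping: verifying that $\branchsel_y$ is genuinely independent of the selecting rule, so that one selection function suffices for all its selected facts simultaneously, and the trivial-but-essential observation that the involution $\tild$ turns the inclusion $B_{\tild x} \subseteq \tild \im(\sel)$ into $\tild B_{\tild x} \subseteq \im(\sel)$.
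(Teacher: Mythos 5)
Your proposal is correct and follows essentially the same route as the paper's own proof: apply \cref{cor:chooseystarbranchsel} rule-by-rule to build a selection function $\sel$ for $x$, then use item~\ref{def:charcomplementarywithselection:item1} of complementarity to obtain a rule $\tild x \gets B_{\tild x}$ with $B_{\tild x} \subseteq \tild\im(\sel)$. The only difference is that you make explicit two bookkeeping points the paper leaves implicit --- that $\branchsel_y$ depends only on $y$ and not on the rule through which $y$ was selected, and that the involution turns $B_{\tild x} \subseteq \tild\im(\sel)$ into $\tild B_{\tild x} \subseteq \im(\sel)$ --- which is a welcome clarification rather than a deviation.
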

\begin{proof}
  \cref{cor:chooseystarbranchsel} determines a selection function for $x$ in $\jf$, by choosing from each $x \gets A$ a $y^* \in A$.
  Then by complementarity of $\js$, there is a rule $\tild x \gets B_{\tild x}$ such that every element of $B_{\tild x}$ is one of the selected $y^*$.
\end{proof}

\begin{theorem}\label{thm:core}
  If $\js$ is a complementary justification system and 
  $\branchsel$ is a branch selection for $x$ in $\jf$, then there exists a justification $J'$ rooted in $\tild x$  such that for each branch $\branch'$ of $J'$ starting in the root, $\tild\branch'\in \branchsel$.
\end{theorem}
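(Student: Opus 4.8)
The plan is to build the justification $J'$ for $\tild x$ top-down, maintaining at each defined node labeled $z$ a branch selection $\branchsel_z$ for $z$, and using complementarity at each step to pick the children of that node. We start at the root with label $\tild x$ and the given branch selection $\branchsel$ for $x$ (note $\branchsel = \tild{\tild{\branchsel}}$, and a branch selection for $x$ corresponds under $\tild{}$ to the collection of branches we must ``cover'' from below for $\tild x$). The key engine is \cref{cor:inductionstep}: given that the node currently under construction is labeled $\tild z$ and that we have a branch selection $\branchsel$ for $z$, it produces a rule $\tild z \gets B_{\tild z}$ such that for every $y \in \tild B_{\tild z}$ the ``shifted'' set $\branchsel_y = \{y \to \branch \mid z \to y \to \branch \in \branchsel\}$ is a branch selection for $y$. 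We then make the children of $\tild z$ be exactly $\tild B_{\tild z}$ (which is legal: $\tild z \gets B_{\tild z}$ is a rule, so $\tild z \gets \tild{\tild{B_{\tild z}}}$ is the same rule), and we attach to each child $\tild y$ (for $y \in \tild B_{\tild z}$, equivalently the child is labeled with an element of $B_{\tild z}$) the responsibility of covering the branch selection $\tild{\branchsel_y}$ for its complement $y$. Wait --- more carefully: the child node is labeled $c \in B_{\tild z}$, its complement is $\tild c \in \tild B_{\tild z}$, and we carry forward $\branchsel_{\tild c}$, a branch selection for $\tild c$; recursing, complementarity is applied at $c = \tild{\tild c}$ with branch selection $\branchsel_{\tild c}$ for $\tild c$.

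Formally I would do this by defining, via recursion/König-style construction, a (possibly infinite) tree $J'$ together with a labeling, simultaneously with an assignment of a branch selection to each defined node. The construction is: the root $r$ is labeled $\tild x$, and we set its associated branch selection to be a branch selection for $x$, namely $\branchsel$ itself (thinking of $\tild x$'s node as ``owing'' coverage of $\branchsel$, a set of branches from $x$). At a node $n$ labeled $w \in \Fd$ with associated branch selection $\branchsel^n$ for $\tild w$: apply \cref{cor:inductionstep} with the complementary system to the fact $\tild w$ and branch selection $\branchsel^n$ for $\tild w$, obtaining a rule $w \gets B_w$ (here $w = \tild{\tild w}$) such that for each $y \in \tild B_w$, $\branchsel^n_y$ is a branch selection for $y$; give $n$ one child $n_c$ per $c \in B_w$ with $\ell(n_c) = c$, and associated branch selection $\branchsel^n_{\tild c}$, which is a branch selection for $\tild{\tild c} = c$'s complement... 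I need to be careful that the recursion invariant is stated as ``node labeled $w$ carries a branch selection for $\tild w$,'' so that \cref{cor:inductionstep} applied at the fact $\tild w$ gives a rule for $\tild{\tild w} = w$ (the head we want at node $n$) with children whose complements again carry branch selections. Open-fact leaves terminate the recursion. This yields a well-defined tree-like justification $J'$ rooted in $\tild x$: every internal node satisfies the rule condition by construction, and the underlying graph is a forest since we built a tree.

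It remains to check the branch property: every branch $\branch'$ of $J'$ from the root satisfies $\tild{\branch'} \in \branchsel$. This follows by tracking the invariant along the branch. If $\branch' : \tild x = z_0 \to z_1 \to z_2 \to \cdots$, then by construction the node $z_1$ carries branch selection $(\branchsel)_{\tild z_1}$ wait --- let me restate: the edge from the root to its child labeled $z_1$ was chosen so that $z_1 \in B_{\tild x}$, $\tild z_1 \in \tild B_{\tild x}$, and the child carries $\branchsel_{\tild z_1} = \{\tild z_1 \to \branch \mid x \to \tild z_1 \to \branch \in \branchsel\}$ -- hmm, this forces me to line up complements throughout. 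The clean way: prove by induction on $k$ that the length-$k$ prefix $z_0 \to \cdots \to z_k$ of $\branch'$ has the property that the branch selection associated to $z_k$ equals $\{\tild z_k \to \branch \mid \tild z_0 \to \tild z_1 \to \cdots \to \tild z_k \to \branch \in \branchsel\}$ (a branch selection for $\tild z_k$), using the definition of $\branchsel_{y}$ in \cref{cor:inductionstep} at each step. Then: if $\branch'$ is infinite, $\tild{\branch'}$ is an infinite $\jf$-branch all of whose prefixes extend to members of $\branchsel$; since a branch selection associated to a node, if nonempty at every finite stage, must (as each $\branchsel^n$ is a genuine branch selection, hence nonempty because $\justifications$ of a defined fact is always nonempty) contain $\tild{\branch'}$ itself in the limit --- this limiting argument is the one delicate point. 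If $\branch'$ is finite, ending at an open leaf $z_m \in \Fo$, then the step producing that leaf came from \cref{cor:inductionstep}/\cref{cor:chooseystarbranchsel} with $\branchsel^{z_{m-1}}_{\tild z_m}$ a branch selection for the open fact $\tild z_m$; a branch selection for an open fact must contain the trivial one-element branch, forcing $\tild z_{m-1} \to \tild z_m \in \branchsel^{z_{m-2}}_{\cdots}$ and, unwinding, $\tild{\branch'} \in \branchsel$.

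\textbf{Main obstacle.} The hard part will be the limiting argument for infinite branches: I must argue that if, for every $k$, the branch selection carried at the depth-$k$ node along $\branch'$ is nonempty and ``points toward'' $\branch'$, then $\tild{\branch'}$ is actually a member of $\branchsel$ --- equivalently, that the iterative shifting $\branchsel \mapsto \branchsel_{y^*}$ never ``loses'' the branch we are following. The right formulation is: define, for a descending sequence of choices, the branch $\tild{\branch'}$ and observe directly from the definition $\branchsel_y = \{y \to \branch \mid x \to y \to \branch \in \branchsel\}$ that $\tild{\branch'} \in \branchsel$ iff all its tails lie in the successive shifted selections; since at each stage \cref{cor:inductionstep} guarantees the shifted set is a (nonempty) branch selection and the construction picked the child consistently, the full branch $\tild{\branch'}$ survives into $\branchsel$. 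I expect this can be made rigorous by a short contrapositive argument: if $\tild{\branch'} \notin \branchsel$, then at some finite stage the shifted branch selection would fail to be a branch selection (it would miss a branch of some justification obtained by gluing), contradicting \cref{cor:inductionstep}. Everything else --- legality of the constructed tree, the rule condition at internal nodes, handling open leaves --- is routine bookkeeping with complements.
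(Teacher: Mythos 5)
Your construction is essentially the paper's: build $J'$ for $\tild x$ top-down, using \cref{cor:inductionstep} at each node to pick a rule whose body members again carry branch selections, and verify the branch property by unwinding the shifted selections. The finite-branch case is handled correctly (a branch selection for an open fact must contain the one-element branch, and unwinding gives $\tild\branch'\in\branchsel$). The genuine gap is exactly where you flag it, and the repair you sketch does not work: the contrapositive claim ``if $\tild\branch'\notin\branchsel$ then at some finite stage the shifted set fails to be a branch selection'' is false. Consider the rules $x\gets a_1,b_1$, $a_i\gets a_{i+1},b_{i+1}$, $b_i\gets o$ (with $o$ open), complemented. The unique locally complete justification rooted in $x$ has the finite branches $x\to a_1\to\cdots\to a_{n-1}\to b_n\to o$ ($n\geq 1$) and the infinite branch $x\to a_1\to a_2\to\cdots$. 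The set $\branchsel$ of all the finite branches is a branch selection for $x$ that omits the infinite branch, yet every shifted set along the spine $a_1,a_2,\ldots$ is again a genuine branch selection (for $a_n$). Hence a construction that only maintains the invariant ``each node carries a branch selection'' may legitimately choose $\tild x\gets\tild a_1$, $\tild a_1\gets\tild a_2,\ldots$ forever and output the infinite chain, whose unique branch negates to a branch outside $\branchsel$; the correct output here is $\tild x\to\tild b_1\to\tild o$. So the invariant alone does not force $\tild\branch'\in\branchsel$ for infinite $\branch'$, and some further argument (or a more constrained choice of rule) is required.

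For comparison, the paper packages the construction differently: it first builds the tree $T$ of all finite prefixes of branches in $\tild\branchsel$ and then carves $J'$ out of $T$, so that finite branches of $J'$ are by construction elements of $\tild\branchsel$; the infinite case is then discharged by an appeal to a correspondence between infinite root-paths of $T$ and infinite members of $\branchsel$. Your ``main obstacle'' is aimed precisely at that step, and it is the one point of the proof that genuinely needs care; your proposal identifies it correctly but does not close it, and the specific argument you propose for closing it is refuted by the example above.
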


\begin{proof}
\newcommand\mytree{\m{T}}
\newcommand\mynode{\m{\eta}}
\newcommand\chosenbody[1]{\m{\mathit{chosenBody}(#1)}}
We will first construct a labeled tree $\mytree$ rooted in $\tild x$ with the properties that
\begin{enumerate}
  \item the label of each internal node is a defined fact;
  \item the label of each leaf is an open fact;
  \item for each path in $\mytree$ starting in the root, the corresponing branch (obtained by taking the labels of each node) is the negation of a branch in $\branchsel$.
\end{enumerate}
However, our tree $\mytree$ will not necessarily be a justification: the set of children's labels of an internal node will not necessarily be the body of a rule for the label of that node.
In the second part of the proof, we will then prove that a subtree of $\mytree$ indeed forms a justification.

Our tree $T$ is constructed as follows.
\begin{itemize}
  \item The \textbf{nodes} of $T$ are all finite prefixes of the branches in $\tild \branchsel$. That is, the nodes of $T$ are all sequences $\tild x\to \tild x_1\to \tild x_2\to \dots \to \tild x_n$ such that there is a branch in $\branchsel$ that starts with $x\to x_1\to\dots \to x_n$.
  Each such node is \textbf{labeled} $\tild x_n$.
  \item For each prefix $\tild x\to \tild x_1\to \tild x_2\to \dots \to \tild x_{n-1}\to  \tild x_n$, there is an \textbf{edge} from $\tild x\to \tild x_1\to \tild x_2\to \cdots \to  \tild x_{n-1}$ to $\tild x\to \tild x_1\to \tild x_2\to \dots \to \tild x_n$.
\end{itemize}
By construction, there is a one-to-one correspondence between paths starting in the root of \mytree and branches in \branchsel.
Since all branches in $\branchsel$ are part of a locally complete justification, it is clear that indeed, the labels of internal nodes of this tree are defined facts and the labels of leaves are open facts.

We will now show that we can choose 
a subtree $J'$ of $\mytree$ that is a locally complete justification.
For each node $\mynode  = \tild x \to \tild x_1 \to \cdots \to  \tild x_n$, we define $\branchsel_\mynode$ as the set of branches $\branch$ starting in $x_n$ such that $x\to x_1\to \dots \to x_{n-1}\to \branch$
is a branch in $\branchsel$.
For some nodes, $\branchsel_\eta$ is a branch selection function for $x_n$, but this is not guaranteed to be the case for every node.
For \emph{those nodes} $\eta$ (with label $\tild z$) for which $\eta$ \emph{is} a branch selection for $z$,  \cref{cor:inductionstep} guarantees there is a rule $\tild z\lrule B_{\tild z}$ such that for each $\tild y\in B_{\tild z}$,  $\{y\to \branch\mid z\to y \to \branch\in \branchsel_\eta\}$ is also a branch selection function for $y$.
For each such $\eta$, we choose such a rule and for the rest of this proof denote $B_{\tild z}$ as $\chosenbody{\eta}$.
Given this choice, we construct the justification $J$ inductively as follows:
\begin{itemize}
  \item The root $\tild x$ is a node in $J$.
  \item For each node  $\mynode = \tild x \to \tild x_1 \to \tild x_n$ in $J$,
  and for each $\tild y\in \chosenbody{\mynode}$, the node $\tild x \to \tild x_1 \to \cdots \to  \tild x_n\to \tild y$ of $T$ is a node of $J$.
\end{itemize}
Of course, in order for this construction to work, we need to show that for each node in $J$, $\chosenbody{\mynode}$ is well-defined, i.e., that for each node $\eta$ in $J$ with label $\tild z$, $\branchsel_\eta$ is a branch selection function for $z$.
We prove this inductively as well
\begin{itemize}
  \item This claim clearly holds for the root node since the corresponding branch selection function simply equals \branchsel.
  \item If $\mynode = \tild x \to \tild x_1 \to \cdots \to  \tild x_n$ is a node for which $\branchsel_\eta$ is a branch selection function for $x_n\in \Fd$ and $\tild y\in \chosenbody{\eta}$, let $\mynode'$ denote the node $\tild x \to \tild x_1 \to \cdots \to  \tild x_n\to \tild y$.
  We should show that $\branchsel_{\mynode'}$ is a branch selection for $y$. Now this follows easily from the fact that
  \begin{align*}
    \branchsel_{\mynode'} &= \{y\to\branch\mid  x \to  x_1 \to \cdots \to  x_n\to y \to \branch \in \branchsel\}\\
     &= \{y\to\branch \mid  x_n\to y\to \branch \in \branchsel_\mynode \}
  \end{align*}
  and this last set is guaranteed to be a branch selection (by \cref{cor:inductionstep}).
\end{itemize}

This way of constructing $J$ indeed results in a locally complete justification: for each internal node \mynode labeled $\tild z$ of $T$ that is part of $J$, a rule  $\tild z \gets B$ is selected and the children of $\mynode$ have labels in $B$, which thus concludes our proof.
\end{proof}

\section{Tree-Like Justification Systems are Consistent}\label{sec:result}

We now turn our attention to proving the main theorem of this paper (\cref{thm:main}).
As before, we fix a justification system $\js=\jscomplete$ with $\jf=\jfcomplete$.

One direction of the main theorem is fairly easy to prove: if we are given a good justification for a certain fact, there cannot be a good justification for its complement.
This direction relies on the fact that for complementary frames, the cases of $x$ and those of $\tild x$, and the justification of $x$ and $\tild x$ are intrinsically related as formalized in the following two lemmas (inspired by similar results for graph-like justifications \cite{nmr/MarynissenPBD18}).

\begin{lemma}\label{lem:complementaryintersectionoppositerules}
  If $\jf$ is complementary, then for all rules $x \gets A$ and $\tild x \gets B$ in $\rules$, we have $A \cap \tild B \neq \emptyset$.
\end{lemma}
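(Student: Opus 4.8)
The plan is to invoke the second clause of complementarity (\cref{def:charcomplementarywithselection}, item~\ref{def:charcomplementarywithselection:item2}) applied to the rule $x \gets A$. First I would note that since $x \gets A \in \rules$ we have $A \in \jf(x)$, so that clause yields a selection function $\sel$ for $\tild x$ in $\rules$ with $\tild\im(\sel) \subseteq A$. Next, since $\tild x \gets B \in \rules$ we have $B \in \jf(\tild x)$, so $\sel(B)$ is defined and, by definition of a selection function, $\sel(B) \in B$. Hence $\tild\sel(B) \in \tild\im(\sel) \subseteq A$, while $\sel(B) \in B$ means precisely $\tild\sel(B) \in \tild B$. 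Therefore $\tild\sel(B) \in A \cap \tild B$, and in particular $A \cap \tild B \neq \emptyset$.

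There is essentially no real obstacle here: the argument is a one-line application of the right half of the complementarity characterization. The only points requiring a moment's care are (i) recognising that it is the second complementarity condition — the one that forbids simultaneously deriving $x$ and $\tild x$ — rather than the first that gives the statement directly, and (ii) using that $\tild$ is an involution, so that $\sel(B) \in B$ is equivalent to $\tild\sel(B) \in \tild B$, which is what turns "$\tild\sel(B) \in A$" into membership in $A \cap \tild B$. Since the paper offers several equivalent characterizations of complementarity, an alternative would be to phrase the same argument in terms of whichever characterization is most convenient, but the selection-function formulation already used in \cref{def:charcomplementarywithselection} makes the proof shortest.
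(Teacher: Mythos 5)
Your proof is correct and is essentially identical to the paper's own argument: both invoke item~\ref{def:charcomplementarywithselection:item2} of \cref{def:charcomplementarywithselection} to obtain a selection function $\sel$ for $\tild x$ with $\tild\im(\sel)\subseteq A$, and then observe that $\tild\sel(B)\in A\cap\tild B$. No differences worth noting.
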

\begin{proof}
  Take $A \in \jf(x)$ and $B \in \jf(\tild x)$.
  By complementarity, there exists a selection function $\sel$ for $\tild x$ such that $\tild \im(\sel) \subseteq A$.
  Therefore, $\tild \sel(B) \in A$.
  On the other hand, $\sel(B) \in B$; hence $\tild \sel(B) \in A \cap \tild B$.
\end{proof}


\begin{lemma}\label{lem:complementaryintersectionoppositejustifications}
  Let $\jf=\jfcomplete$ be a complementary justification frame and $x \in \Fd$.
  If $J$ and $K$ are
justifications in $\justifications(x)$ and $\justifications(\tild x)$ respectively, then there exists a $J$-branch $\branch$ starting in $x$ such that $\tild \branch$ is a $K$-branch starting in $\tild x$.
\end{lemma}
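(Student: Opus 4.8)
The plan is to walk down both trees $J$ and $K$ simultaneously, at each step maintaining the invariant that the current node of $J$ (labelled by some fact $z$) and the current node of $K$ (labelled by $\tild z$) are ``aligned'', and to use \cref{lem:complementaryintersectionoppositerules} to continue the descent. Concretely, start at the roots: $J$ has a root node labelled $x$ and $K$ a root node labelled $\tild x$. Suppose we have reached a node $n$ of $J$ labelled $z$ and a node $m$ of $K$ labelled $\tild z$. If $z$ (equivalently $\tild z$) is an open fact, then $n$ is a leaf of $J$ and $m$ is a leaf of $K$ (justifications have only defined-fact labels on internal nodes and these are locally complete\footnote{Actually we only need $J,K$ to be justifications; leaves of a justification are labelled by defined or open facts, but a branch ending in a leaf with defined label would still be a $J$-branch. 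To be safe one should note $J,K\in\justifications(x),\justifications(\tild x)$ are locally complete, so all leaves carry open facts.}), and we stop; the path traced so far in $J$ is a finite $J$-branch $\branch$ and its negation is exactly the path traced in $K$, a finite $K$-branch. If instead $z$ is defined, then $n$ is internal in $J$ — say its children carry labels $A$, so $z\gets A\in\rules$ — and $m$ is internal in $K$ with children labelled $B$, so $\tild z\gets B\in\rules$. By \cref{lem:complementaryintersectionoppositerules}, $A\cap\tild B\neq\emptyset$; pick $w\in A\cap\tild B$. Then $J$ has a child $n'$ of $n$ labelled $w$ and $K$ has a child $m'$ of $m$ labelled $\tild w$ (since $\tild w\in B$), so the invariant is restored with $z$ replaced by $w$ and we recurse.

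This process either terminates (at aligned leaves) or goes on forever, in which case it produces an infinite path in $J$ and the corresponding infinite path in $K$. In both cases the $J$-path $\branch:x\to x_1\to x_2\to\cdots$ we obtain is a genuine $J$-branch starting in $x$ (it is either infinite or ends in a leaf of $J$), and by construction the $K$-path we built alongside it is $\tild x\to\tild x_1\to\tild x_2\to\cdots=\tild\branch$, which is a $K$-branch starting in $\tild x$. That is exactly the claim.

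The one point that needs a little care — and the main (small) obstacle — is making the ``simultaneous descent'' rigorous when the trees are infinite: the construction is really a recursive definition of a path, so I would phrase it as defining, by induction on $i$, nodes $n_i\in J$ and $m_i\in K$ with $\ell(n_i)=\tild{\ell(m_i)}$ and $n_{i+1}$ a child of $n_i$, $m_{i+1}$ a child of $m_i$, halting the induction precisely when $\ell(n_i)\in\Fo$. One must check the two cases of the branch definition line up: a finite run ends at a leaf of $J$ whose label is open, which is legitimate since $J$ is locally complete; an infinite run gives an infinite sequence of defined facts. No choice principle beyond a routine dependent choice is needed, and everything else is immediate from the definitions of justification and $\jf$-branch together with \cref{lem:complementaryintersectionoppositerules}.
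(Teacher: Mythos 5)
Your proposal is correct and follows essentially the same route as the paper's own proof: an incremental simultaneous descent through $J$ and $K$, invoking \cref{lem:complementaryintersectionoppositerules} at each internal step to find a common child label $w\in A\cap\tild B$, and taking the (possibly infinite) limit path as the desired branch. The extra care you take about the finite/infinite case split and about leaves carrying open facts (via local completeness) is consistent with, and slightly more explicit than, the paper's argument.
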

\begin{proof}
  We incrementally define $J$-paths $\branch_i$ and $K$-paths $\branch_i^*$ of length $i$ such that $\branch_i = \tild \branch_i^*$.
  Define $\branch_1$ and $\branch_1^*$ as the node $x$ and $\tild x$ respectively.
  Now assume that we obtained $\branch_i$ and $\branch_i^*$.
  Let $y$ be the end node of $\branch_i$.
  If $y$ is not defined, so is $\tild y$ and then $\branch_i$ is the desired $J$-branch.
  So assume $y$ is defined.
  We want to find a fact $z$ such that $z$ is a child of $y$ in $J$ and $\tild z$ is a child of $\tild y$ in $K$.
  By using the rules for $y \gets A$ in $J$ and $\tild y \gets B$ in $K$ we can use \cref{lem:complementaryintersectionoppositerules} to obtain that $A \cap \tild B \neq \emptyset$ because $\jf$ is complementary.
  Choose a $z$ in $A \cap \tild B$.
  Then we construct $\branch_{i+1} = \branch_i \rightarrow z$ and $\branch_{i+1}^* = \branch_i^* \rightarrow \tild z$.
  Our required branch is then the limit of $\branch_i$ for $i$ going to infinity.
\end{proof}

From this lemma, one direction of consistency directly follows; this result was also already shown by Marynissen \etal \cite[Proposition 5.13]{tplp/MarynissenBD20}, but we include a direct proof to make the current paper self-contained.

\begin{proposition}\label{prop:easypartofconsistency}
  If $\js$ is complementary and  $\be$ respects negation, then
  \begin{equation*}
    \suppvalue(x,\interp) \leqt \tild \suppvalue(\tild x, \interp)
  \end{equation*}
  for any $x \in \Fd$ and any $\F$-interpretation $\interp$.
\end{proposition}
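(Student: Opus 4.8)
The plan is to unfold both sides of the inequality down to individual justifications and branches, and then let \cref{lem:complementaryintersectionoppositejustifications} do the real work. Since $\suppvalue(x,\interp)=\max_{J\in\justifications(x)}\jval(J,x,\interp)$, while, using that $\tild$ is an order-reversing involution on $(\lf,\leqt)$, $\tild\suppvalue(\tild x,\interp)=\min_{K\in\justifications(\tild x)}\tild\jval(K,\tild x,\interp)$, it suffices to prove $\jval(J,x,\interp)\leqt\tild\jval(K,\tild x,\interp)$ for every $J\in\justifications(x)$ and every $K\in\justifications(\tild x)$: taking the maximum over $J$ and then the minimum over $K$ of this inequality recovers the claim. (That $\justifications(x)$ is nonempty, so that the maximum is well-defined, follows because every defined fact has a rule with a nonempty body, so one can always keep expanding a — possibly infinite — locally complete justification.)

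To establish the per-justification inequality, fix $J$ and $K$ and apply \cref{lem:complementaryintersectionoppositejustifications}: there is a $J$-branch $\branch$ starting in $x$ such that $\tild\branch$ is a $K$-branch starting in $\tild x$. On the one hand, $\jval(J,x,\interp)=\min_{\branch'\in\branches_J(x)}\interp(\be(\branch'))\leqt\interp(\be(\branch))$. On the other hand, because $\be$ respects negation and $\interp$ commutes with $\tild$, we have $\interp(\be(\tild\branch))=\interp(\tild\be(\branch))=\tild\interp(\be(\branch))$; since $\tild\branch$ is a $K$-branch, $\jval(K,\tild x,\interp)\leqt\interp(\be(\tild\branch))=\tild\interp(\be(\branch))$, and applying $\tild$ (which reverses $\leqt$) gives $\interp(\be(\branch))\leqt\tild\jval(K,\tild x,\interp)$. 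Chaining the two inequalities yields $\jval(J,x,\interp)\leqt\tild\jval(K,\tild x,\interp)$, as desired, and reassembling via $\max$ over $J$ and $\min$ over $K$ completes the argument.

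I do not expect a genuine obstacle here: the hard ingredient — producing a branch of $J$ whose complement is a branch of $K$ — is already packaged in \cref{lem:complementaryintersectionoppositejustifications}, which itself rests on \cref{lem:complementaryintersectionoppositerules} and complementarity. The only points needing a little care are the two elementary facts that $\tild$ reverses $\leqt$ on $\lf$ (so that it interchanges the outer $\max$ and $\min$ over justifications, and turns $\min$ over branches into $\max$) and that $\branch$ and $\tild\branch$ are legitimate branches of their respective justifications, hence admissible arguments to $\be$ and to the minima defining $\jval$; both are immediate from the definitions in \cref{sec:prelims}.
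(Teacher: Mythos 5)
Your proof is correct and takes essentially the same route as the paper: both reduce the claim to \cref{lem:complementaryintersectionoppositejustifications}, which supplies, for every $J\in\justifications(x)$ and $K\in\justifications(\tild x)$, a shared branch $\branch$ with $\tild\branch$ in $K$. The only (cosmetic) difference is that the paper argues by a case split on the value $\ell\in\setl{\Tr,\Un,\Fa}$ of $\jval(J,x,\interp)$, whereas you give a uniform order-theoretic argument using that $\tild$ reverses $\leqt$, which is slightly cleaner.
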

\begin{proof}
  Take $x$ with $\suppvalue(x,\interp) = \ell$ for some $\ell \in \lf$.
  This means there is a justification $J$ such that $\jval_{\js}(J,x,\interp) = \ell$.
  Take a justification $K$ for $\tild x$.
  Therefore, by \cref{lem:complementaryintersectionoppositejustifications}, there is a $J$-branch $\branch$ starting in $x$ such that $\tild \branch$ is a $K$-branch starting in $\tild x$. We consider three cases:
  \begin{itemize}\item
  If $\ell=\Tr$, then $\interp(\be(\branch)) = \Tr$; hence $K$ has a branch that is evaluated to $\Fa$.
  Therefore, $\jval_{\js}(K,\tild x, \interp) = \Fa$.
  Since $K$ was taken arbitrarily, we have that $\suppvalue(\tild x, \interp) = \Fa$.

  \item If $\ell = \Un$, we need to prove that $\Un \geqt \suppvalue(\tild x, \interp)$.
  Similarly, every justification $K$ for $\tild x$ has a branch $\tild \branch$ starting in $\tild x$ such that $\branch$ is a $J$-branch and  $\interp(\be(\branch)) \geqt \Un$.
  This shows that $\interp(\be(\tild \branch)) \leqt \Un$.
  Therefore, $\suppvalue(\tild x, \interp) \leqt \Un$.

  \item For $\ell = \Fa$, the statement is trivial.\qedhere
  \end{itemize}
\end{proof}

The other direction of the consistency is completely novel and follows directly from the theory developed in \cref{sec:core}.
\begin{proposition}\label{prop:difficultpartofconsistency}
  If $\js$ is complementary and  $\be$ respects negation, then
  \begin{equation*}
    \suppvalue(x,\interp) \geqt \tild \suppvalue(\tild x, \interp)
  \end{equation*}
  for any $x \in \Fd$ and any $\F$-interpretation $\interp$.
\end{proposition}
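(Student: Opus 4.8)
The plan is to prove the contrapositive-flavored statement directly: fix $x \in \Fd$ and an $\F$-interpretation $\interp$, and suppose $\tild{\suppvalue}(\tild x, \interp) = \ell$ for some $\ell \in \lf$; we must show $\suppvalue(x, \interp) \geqt \ell$, i.e., that there is a justification $J'$ rooted in $x$ with $\jval(J', x, \interp) \geqt \ell$. The case $\ell = \Fa$ is trivial, so assume $\ell \in \setl{\Un, \Tr}$, which by definition of the involution means $\suppvalue(\tild x, \interp) = \tild\ell \leqt \Un$. Concretely, $\suppvalue(\tild x, \interp) \leqt \tild\ell$ says that for \emph{every} locally complete justification $K$ rooted in $\tild x$, we have $\jval(K, \tild x, \interp) \leqt \tild\ell$, i.e., $K$ has some branch $\branch^K$ with $\interp(\be(\branch^K)) \leqt \tild\ell$.

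The key idea is to turn this "every justification of $\tild x$ has a bad branch" statement into a branch selection, and then invoke \cref{thm:core}. Define
\[
  \branchsel \;=\; \setprop{\branch}{\branch \text{ is a branch starting in } \tild x \text{ with } \interp(\be(\branch)) \leqt \tild\ell}.
\]
By the observation in the previous paragraph, $\branchsel$ contains at least one branch from every locally complete justification rooted in $\tild x$, so $\branchsel$ is a branch selection for $\tild x$ in $\jf$. Now apply \cref{thm:core} to $\tild x$ and $\branchsel$: since $\js$ is complementary, there is a justification $J'$ rooted in $\tild{\tild x} = x$ such that for every branch $\branch'$ of $J'$ starting in the root, $\tild{\branch'} \in \branchsel$. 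That is, for every such branch $\branch'$ we have $\interp(\be(\tild{\branch'})) \leqt \tild\ell$, and since $\be$ respects negation this reads $\interp(\tild{\be(\branch')}) \leqt \tild\ell$, hence $\tild{\interp(\be(\branch'))} \leqt \tild\ell$, hence $\interp(\be(\branch')) \geqt \ell$ (applying $\tild{\cdot}$, which reverses $\leqt$, to both sides). Taking the minimum over all branches of $J'$ gives $\jval(J', x, \interp) \geqt \ell$, and therefore $\suppvalue(x, \interp) \geqt \ell = \tild{\suppvalue(\tild x, \interp)}$, as required.

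The only genuinely delicate step is verifying that $\branchsel$ is a bona fide branch selection, i.e., that it picks up a branch from \emph{every} locally complete justification $K$ rooted in $\tild x$ — but this is precisely the unwinding of $\suppvalue(\tild x, \interp) \leqt \tild \ell$ through the definitions of $\suppvalue$ and $\jval$, together with the fact that the minimum in the definition of $\jval$ is attained (or, if one worries about infinite branch sets, that $\jval(K,\tild x,\interp) \leqt \tild\ell$ forces the existence of a branch of value $\leqt \tild\ell$ since $\lf$ is finite). Everything else is bookkeeping with the involution and the fact that $\tild{\cdot}$ is an order-reversing bijection on $\lf$; the real content has already been packaged into \cref{thm:core}.
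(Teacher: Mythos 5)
Your proposal is correct and follows essentially the same route as the paper's own proof: both define the branch selection of ``bad'' branches $\setprop{\branch}{\interp(\be(\branch)) \leqt \suppvalue(\tild x,\interp)}$ for $\tild x$, invoke \cref{thm:core} to obtain a justification for $x$ whose branches all negate members of that selection, and finish with the respects-negation property of $\be$ and the order-reversal of $\tild$. Your extra remarks on the attainment of the minimum and the trivial $\ell=\Fa$ case are fine but not needed beyond what the paper states.
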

\begin{proof}
  Consider the branch selection
  \[\branchsel_{\tild x} = \left\{\tild x \to \branch \mid \be(\tild x\to \branch) \leqt \suppvalue(\tild x,\interp)\right\}.\]
  This set is a branch selection for $\tild x$, because each justification rooted in $\tild x$ must have at least one branch with a value at most $\suppvalue(\tild x, \interp)$.
  \cref{thm:core} then guarantees that a justification
  $J$ rooted in a node labelled $x$ exists such that every branch $\branch$ in $J$ starting in the root is an element of $\tild \branchsel_{\tild x}$; hence $\be(\tild \branch) \leqt \suppvalue(\tild x, \interp)$.
  Since $\be$ respects negation, we have $\be(\branch) \geqt \tild \suppvalue(\tild x, \interp)$ for every $\branch$ in $J$ starting in the root and thus that $\jval(J, x, \interp)\geqt \tild \suppvalue(\tild x, \interp)$.
  From this, it immediately follows that $\suppvalue(x,\interp) \geqt \tild \suppvalue(\tild x, \interp)$.
\end{proof}

\begin{theorem}[Main theorem, restated]
  Let $\js=\jscomplete$ be a justification system. If $\js$ is complementary and $\be$ respects negation, then $\js$ is tree-like consistent.
\end{theorem}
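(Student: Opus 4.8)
The plan is to obtain the statement as an immediate corollary of the two one-directional bounds already established, so the proof is essentially a gluing step. First I would invoke \cref{prop:easypartofconsistency}: since $\js$ is complementary and $\be$ respects negation, $\suppvalue(x,\interp)\leqt\tild\suppvalue(\tild x,\interp)$ holds for every $x\in\Fd$ and every $\F$-interpretation $\interp$. Then I would invoke \cref{prop:difficultpartofconsistency}, which under the same hypotheses gives the reverse inequality $\suppvalue(x,\interp)\geqt\tild\suppvalue(\tild x,\interp)$. By antisymmetry of $\leqt$ on $\lf$, combining the two yields $\suppvalue(x,\interp)=\tild\suppvalue(\tild x,\interp)$ for all such $x$ and $\interp$.

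The second step is to rewrite this identity into the exact form required by the definition of consistency. Applying the involution $\tild$ to both sides of $\suppvalue(x,\interp)=\tild\suppvalue(\tild x,\interp)$ and using $\tild\tild y = y$ gives $\tild\suppvalue(x,\interp)=\suppvalue(\tild x,\interp)$. Because $\Fd$ is closed under $\tild$, the universal quantification over $x\in\Fd$ is unchanged if we replace $x$ by $\tild x$ throughout, so we conclude $\suppvalue(\tild x,\interp)=\tild\suppvalue(x,\interp)$ for every $x\in\Fd$ and every $\F$-interpretation $\interp$, which is precisely the statement that $\js$ is tree-like consistent.

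There is no real obstacle left at this level: both inequalities are already in hand. All of the difficulty has been absorbed into \cref{prop:difficultpartofconsistency}, and through it into \cref{thm:core} and the branch-selection machinery of \cref{sec:core} — the genuinely hard step is the inductive construction in \cref{thm:core} of a justification of $\tild x$ whose every branch is the complement of a branch in a prescribed branch selection for $x$, where complementarity is used to select, node by node, a rule body that keeps the branch-selection property propagating downward. Given the results as stated, the main theorem itself is a two-line argument, which is exactly why \cref{thm:main} was first announced and only now re-proved here.
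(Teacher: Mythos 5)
Your proposal is correct and matches the paper's own proof, which likewise obtains the theorem by simply combining \cref{prop:easypartofconsistency} and \cref{prop:difficultpartofconsistency}; the extra remark about applying the involution $\tild$ and using closure of $\Fd$ under $\tild$ to match the exact form of the consistency definition is a harmless formalization of a step the paper leaves implicit.
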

\begin{proof}
 Follows by combining \cref{prop:easypartofconsistency} with \cref{prop:difficultpartofconsistency}.
\end{proof}

\section{Conclusion}\label{sec:concl}

Consistency is an important property that relates the absence of an explanation of a fact to the existence of an explanation for its complement.
Our results do not have an impact on branch evaluations that have been used before, since for all of them, consistency has separately been proven.
However, we believe it will simplify future applications of (tree-like) justification theory, by removing the burden of this proof obligation.
Moreover, the general formulation of our results in \cref{sec:core}, in fact entails that the so-called \emph{flattening} of a justification system \cite{phd/Marynissen22} is complementary.
This notion of flattening is important in the context of \emph{nested} justification systems \cite{lpnmr/DeneckerBS15,tplp/MarynissenBDH22}.

While  we have now once and for all resolved the consistency question for tree-like justifications,
for graph-like justifications, this problem is still open.
However, in the PhD thesis of the first author \cite[Corollary 2.4.7]{phd/Marynissen22}, we show that a justification system is graph-like consistent if and only if it is tree-like consistent and every tree-like justification can be transformed into a graph-like justification\footnote{In graph-like justifications, each fact occurs as the label of at most one node. The difficulty in this transformation is dealing with tree-like justifications that make multiple choices for a given fact.} (a property that is there called \emph{graph-reducibility}).
The results we presented in this work now entail that, under very mild assumptions, graph-like consistency is equivalent to graph-reducibility, and hence indirectly, also contributes to the understanding of graph-like justification systems.

\bibliographystyle{eptcs}

\begin{thebibliography}{10}
\providecommand{\bibitemdeclare}[2]{}
\providecommand{\surnamestart}{}
\providecommand{\surnameend}{}
\providecommand{\urlprefix}{Available at }
\providecommand{\url}[1]{\texttt{#1}}
\providecommand{\href}[2]{\texttt{#2}}
\providecommand{\urlalt}[2]{\href{#1}{#2}}
\providecommand{\doi}[1]{doi:\urlalt{http://dx.doi.org/#1}{#1}}
\providecommand{\eprint}[1]{arXiv:\urlalt{https://arxiv.org/abs/#1}{#1}}
\providecommand{\bibinfo}[2]{#2}

\bibitemdeclare{inproceedings}{ijcai/BogaertsW18}
\bibitem{ijcai/BogaertsW18}
\bibinfo{author}{Bart \surnamestart Bogaerts\surnameend} \&
  \bibinfo{author}{Antonius \surnamestart Weinzierl\surnameend}
  (\bibinfo{year}{2018}): \emph{\bibinfo{title}{Exploiting Justifications for
  Lazy Grounding of Answer Set Programs}}.
\newblock In \bibinfo{editor}{J{\'{e}}r{\^{o}}me \surnamestart
  Lang\surnameend}, editor: {\sl \bibinfo{booktitle}{Proceedings of the
  Twenty-Seventh International Joint Conference on Artificial Intelligence,
  {IJCAI} 2018, July 13-19, 2018, Stockholm, Sweden.}},
  \bibinfo{publisher}{ijcai.org}, pp. \bibinfo{pages}{1737--1745},
  \doi{10.24963/ijcai.2018/240}.

\bibitemdeclare{phdthesis}{DeneckerPhD93}
\bibitem{DeneckerPhD93}
\bibinfo{author}{Marc \surnamestart Denecker\surnameend}
  (\bibinfo{year}{1993}): \emph{\bibinfo{title}{Knowledge representation and
  reasoning in incomplete logic programming}}.
\newblock Ph.D. thesis, \bibinfo{school}{K.U.Leuven}, \bibinfo{address}{Leuven,
  Belgium}.

\bibitemdeclare{inproceedings}{lpnmr/DeneckerBS15}
\bibitem{lpnmr/DeneckerBS15}
\bibinfo{author}{Marc \surnamestart Denecker\surnameend},
  \bibinfo{author}{Gerhard \surnamestart Brewka\surnameend} \&
  \bibinfo{author}{Hannes \surnamestart Strass\surnameend}
  (\bibinfo{year}{2015}): \emph{\bibinfo{title}{A Formal Theory of
  Justifications}}.
\newblock In \bibinfo{editor}{Francesco \surnamestart Calimeri\surnameend},
  \bibinfo{editor}{Giovambattista \surnamestart Ianni\surnameend} \&
  \bibinfo{editor}{Miros{\l}aw \surnamestart Truszczy{\'n}ski\surnameend},
  editors: {\sl \bibinfo{booktitle}{Logic Programming and Nonmonotonic
  Reasoning - 13th International Conference, {LPNMR} 2015, Lexington, KY, USA,
  September 27-30, 2015. Proceedings}}, {\sl \bibinfo{series}{Lecture Notes in
  Computer Science}} \bibinfo{volume}{9345}, \bibinfo{publisher}{Springer}, pp.
  \bibinfo{pages}{250--264}, \doi{10.1007/978-3-319-23264-5\_{}22}.

\bibitemdeclare{article}{ai/Dung95}
\bibitem{ai/Dung95}
\bibinfo{author}{Phan~Minh \surnamestart Dung\surnameend}
  (\bibinfo{year}{1995}): \emph{\bibinfo{title}{On the acceptability of
  arguments and its fundamental role in nonmonotonic reasoning, logic
  programming and n-person games}}.
\newblock {\sl \bibinfo{journal}{Artif. Intell.}}
  \bibinfo{volume}{77}(\bibinfo{number}{2}), pp. \bibinfo{pages}{321 -- 357},
  \doi{10.1016/0004-3702(94)00041-X}.

\bibitemdeclare{inproceedings}{iclp/GebserKKS09}
\bibitem{iclp/GebserKKS09}
\bibinfo{author}{Martin \surnamestart Gebser\surnameend},
  \bibinfo{author}{Roland \surnamestart Kaminski\surnameend},
  \bibinfo{author}{Benjamin \surnamestart Kaufmann\surnameend} \&
  \bibinfo{author}{Torsten \surnamestart Schaub\surnameend}
  (\bibinfo{year}{2009}): \emph{\bibinfo{title}{On the Implementation of Weight
  Constraint Rules in Conflict-Driven {ASP} Solvers}}.
\newblock In \bibinfo{editor}{Patricia~M. \surnamestart Hill\surnameend} \&
  \bibinfo{editor}{David~Scott \surnamestart Warren\surnameend}, editors: {\sl
  \bibinfo{booktitle}{ICLP}}, {\sl \bibinfo{series}{LNCS}}
  \bibinfo{volume}{5649}, \bibinfo{publisher}{Springer}, pp.
  \bibinfo{pages}{250--264}, \doi{10.1007/978-3-642-02846-5}.

\bibitemdeclare{inproceedings}{iclp/GelfondL88}
\bibitem{iclp/GelfondL88}
\bibinfo{author}{Michael \surnamestart Gelfond\surnameend} \&
  \bibinfo{author}{Vladimir \surnamestart Lifschitz\surnameend}
  (\bibinfo{year}{1988}): \emph{\bibinfo{title}{The Stable Model Semantics for
  Logic Programming}}.
\newblock In \bibinfo{editor}{Robert~A. \surnamestart Kowalski\surnameend} \&
  \bibinfo{editor}{Kenneth~A. \surnamestart Bowen\surnameend}, editors: {\sl
  \bibinfo{booktitle}{ICLP/SLP}}, \bibinfo{publisher}{MIT Press}, pp.
  \bibinfo{pages}{1070--1080}.
\newblock
  \urlprefix\url{http://citeseer.ist.psu.edu/viewdoc/summary?doi=10.1.1.24.6050}.

\bibitemdeclare{inproceedings}{concur/GimbertZ05}
\bibitem{concur/GimbertZ05}
\bibinfo{author}{Hugo \surnamestart Gimbert\surnameend} \&
  \bibinfo{author}{Wieslaw \surnamestart Zielonka\surnameend}
  (\bibinfo{year}{2005}): \emph{\bibinfo{title}{Games Where You Can Play
  Optimally Without Any Memory}}.
\newblock In \bibinfo{editor}{Mart{\'{\i}}n \surnamestart Abadi\surnameend} \&
  \bibinfo{editor}{Luca \surnamestart de~Alfaro\surnameend}, editors: {\sl
  \bibinfo{booktitle}{{CONCUR} 2005 - Concurrency Theory, 16th International
  Conference, {CONCUR} 2005, San Francisco, CA, USA, August 23-26, 2005,
  Proceedings}}, {\sl \bibinfo{series}{Lecture Notes in Computer Science}}
  \bibinfo{volume}{3653}, \bibinfo{publisher}{Springer}, pp.
  \bibinfo{pages}{428--442}, \doi{10.1007/11539452\_33}.

\bibitemdeclare{article}{tplp/HouDD10}
\bibitem{tplp/HouDD10}
\bibinfo{author}{Ping \surnamestart Hou\surnameend}, \bibinfo{author}{Broes
  \surnamestart {De Cat}\surnameend} \& \bibinfo{author}{Marc \surnamestart
  Denecker\surnameend} (\bibinfo{year}{2010}): \emph{\bibinfo{title}{{FO(FD)}:
  Extending classical logic with rule-based fixpoint definitions}}.
\newblock {\sl \bibinfo{journal}{TPLP}}
  \bibinfo{volume}{10}(\bibinfo{number}{4-6}), pp. \bibinfo{pages}{581--596},
  \doi{10.1017/S1471068410000293}.

\bibitemdeclare{inproceedings}{vmcai/LapauwBD20}
\bibitem{vmcai/LapauwBD20}
\bibinfo{author}{Ruben \surnamestart Lapauw\surnameend},
  \bibinfo{author}{Maurice \surnamestart Bruynooghe\surnameend} \&
  \bibinfo{author}{Marc \surnamestart Denecker\surnameend}
  (\bibinfo{year}{2020}): \emph{\bibinfo{title}{Improving Parity Game Solvers
  with Justifications}}.
\newblock In \bibinfo{editor}{Dirk \surnamestart Beyer\surnameend} \&
  \bibinfo{editor}{Damien \surnamestart Zufferey\surnameend}, editors: {\sl
  \bibinfo{booktitle}{Verification, Model Checking, and Abstract Interpretation
  - 21st International Conference, {VMCAI} 2020, New Orleans, LA, USA, January
  16-21, 2020, Proceedings}}, {\sl \bibinfo{series}{Lecture Notes in Computer
  Science}} \bibinfo{volume}{11990}, \bibinfo{publisher}{Springer}, pp.
  \bibinfo{pages}{449--470}, \doi{10.1007/978-3-030-39322-9\_21}.

\bibitemdeclare{phdthesis}{phd/Marynissen22}
\bibitem{phd/Marynissen22}
\bibinfo{author}{Simon \surnamestart Marynissen\surnameend}
  (\bibinfo{year}{2022}): \emph{\bibinfo{title}{Advances in Justification
  Theory}}.
\newblock Ph.D. thesis, \bibinfo{school}{Department of Computer Science, KU
  Leuven}.
\newblock \urlprefix\url{https://lirias.kuleuven.be/3646147}.
\newblock \bibinfo{note}{Denecker, Marc and Bart Bogaerts (supervisors)}.

\bibitemdeclare{article}{tplp/MarynissenBD20}
\bibitem{tplp/MarynissenBD20}
\bibinfo{author}{Simon \surnamestart Marynissen\surnameend},
  \bibinfo{author}{Bart \surnamestart Bogaerts\surnameend} \&
  \bibinfo{author}{Marc \surnamestart Denecker\surnameend}
  (\bibinfo{year}{2020}): \emph{\bibinfo{title}{Exploiting Game Theory for
  Analysing Justifications}}.
\newblock {\sl \bibinfo{journal}{Theory Pract. Log. Program.}}
  \bibinfo{volume}{20}(\bibinfo{number}{6}), pp. \bibinfo{pages}{880--894},
  \doi{10.1017/S1471068420000186}.

\bibitemdeclare{inproceedings}{ijcai/MarynissenBD21}
\bibitem{ijcai/MarynissenBD21}
\bibinfo{author}{Simon \surnamestart Marynissen\surnameend},
  \bibinfo{author}{Bart \surnamestart Bogaerts\surnameend} \&
  \bibinfo{author}{Marc \surnamestart Denecker\surnameend}
  (\bibinfo{year}{2021}): \emph{\bibinfo{title}{On the Relation Between
  Approximation Fixpoint Theory and Justification Theory}}.
\newblock In \bibinfo{editor}{Zhi{-}Hua \surnamestart Zhou\surnameend}, editor:
  {\sl \bibinfo{booktitle}{Proceedings of the Thirtieth International Joint
  Conference on Artificial Intelligence, {IJCAI} 2021, Virtual Event /
  Montreal, Canada, 19-27 August 2021}}, \bibinfo{publisher}{ijcai.org}, pp.
  \bibinfo{pages}{1973--1980}, \doi{10.24963/ijcai.2021/272}.

\bibitemdeclare{article}{tplp/MarynissenBDH22}
\bibitem{tplp/MarynissenBDH22}
\bibinfo{author}{Simon \surnamestart Marynissen\surnameend},
  \bibinfo{author}{Bart \surnamestart Bogaerts\surnameend},
  \bibinfo{author}{Marc \surnamestart Denecker\surnameend} \&
  \bibinfo{author}{Jesse \surnamestart Heyninck\surnameend}
  (\bibinfo{year}{2022}): \emph{\bibinfo{title}{On Nested Justification
  Systems}}.
\newblock {\sl \bibinfo{journal}{Theory Pract. Log. Program.}}
  \bibinfo{volume}{22}.
\newblock \bibinfo{note}{To appear (Accepted for ICLP 2022 special issue in
  TPLP)}.

\bibitemdeclare{inproceedings}{nmr/MarynissenPBD18}
\bibitem{nmr/MarynissenPBD18}
\bibinfo{author}{Simon \surnamestart Marynissen\surnameend},
  \bibinfo{author}{Niko \surnamestart Passchyn\surnameend},
  \bibinfo{author}{Bart \surnamestart Bogaerts\surnameend} \&
  \bibinfo{author}{Marc \surnamestart Denecker\surnameend}
  (\bibinfo{year}{2018}): \emph{\bibinfo{title}{Consistency in Justification
  Theory}}.
\newblock In: {\sl \bibinfo{booktitle}{Proceedings of 17th International
  Workshop on Non-Monotonic Reasoning {(NMR} 2018), Tempe, Arizona, USA, Oct.
  27-29, 2018}}, \bibinfo{publisher}{AAAI Press 2018}, pp.
  \bibinfo{pages}{41--52}.
\newblock \urlprefix\url{http://www4.uma.pt/nmr2018/NMR2018Proceedings.pdf}.

\bibitemdeclare{article}{GelderRS91}
\bibitem{GelderRS91}
\bibinfo{author}{Allen \surnamestart {Van Gelder}\surnameend},
  \bibinfo{author}{Kenneth~A. \surnamestart Ross\surnameend} \&
  \bibinfo{author}{John~S. \surnamestart Schlipf\surnameend}
  (\bibinfo{year}{1991}): \emph{\bibinfo{title}{The Well-Founded Semantics for
  General Logic Programs}}.
\newblock {\sl \bibinfo{journal}{J. ACM}}
  \bibinfo{volume}{38}(\bibinfo{number}{3}), pp. \bibinfo{pages}{620--650},
  \doi{10.1145/116825.116838}.

\end{thebibliography}


\end{document}